\documentclass{article}

\PassOptionsToPackage{numbers, sort&compress}{natbib}
\PassOptionsToPackage{dvipsnames}{xcolor}

\usepackage[preprint]{neurips_2026}

\usepackage[utf8]{inputenc}
\usepackage[T1]{fontenc}
\usepackage{hyperref}
\usepackage{url}
\usepackage{booktabs}
\usepackage{amsfonts}
\usepackage{nicefrac}
\usepackage{microtype}
\usepackage{xcolor}
\usepackage{amsmath, amssymb, amsthm}
\usepackage{algorithm}
\usepackage{algpseudocode}
\usepackage{graphicx}
\usepackage{enumitem}
\usepackage{multirow}
\usepackage{tabularx}
\usepackage{tcolorbox}

\newtheorem{theorem}{Theorem}

\title{LLM-PDESR: Robust PDE Discovery via Subdomain Weighted Residuals and LLM-Guided Symbolic Hypothesis Generation}

\author{%
\begin{tabular}{c}
Jinyang Du$^{1}$
\quad
Hao Ma$^{1}$
\quad
Xiaohu Shi$^{2}$
\quad
Bo Yang$^{1,3}$
\\
Yanchun Liang$^{4}$
\quad
Heow Pueh Lee$^{5}$
\quad
Chunguo Wu$^{1,3,*}$
\\[5pt]
{\footnotesize
$^{1}$College of Computer Science and Technology,
Jilin University, Changchun, China}
\\
{\footnotesize
$^{2}$School of Big Data and Artificial Intelligence,
Guangdong University of Finance and Economics}
\\
{\footnotesize
Guangzhou, China}
\\
{\footnotesize
$^{3}$Key Laboratory of Symbolic Computation and Knowledge Engineering}
\\
{\footnotesize
of Ministry of Education, Jilin University, Changchun, China}
\\
{\footnotesize
$^{4}$School of Computer Science,
Zhuhai College of Science and Technology, Zhuhai, China}
\\
{\footnotesize
$^{5}$Department of Mechanical Engineering,
National University of Singapore, Singapore}
\\[3pt]
{\footnotesize
$^{*}$Corresponding author:
\texttt{wucg@jlu.edu.cn}}
\end{tabular}%
}

\begin{document}

\maketitle

\begin{abstract}
Discovering governing partial differential equations (PDEs) from noisy observational data is a fundamental challenge in scientific machine learning. Traditional symbolic regression (SR) methods often struggle to identify accurate equations within vast combinatorial search spaces, largely due to their inability to incorporate essential domain-specific prior knowledge. Furthermore, reliance on pointwise evaluations and discrete finite differences inherently amplifies high-frequency noise, creating deceptive fitness landscapes that derail the optimization process. To resolve these bottlenecks, we propose LLM-PDESR, a framework that integrates the structural hypothesis generation of Large Language Models (LLMs) with a mathematically rigorous evaluation environment. By employing $\mathcal{C}^4$ continuous quintic splines for robust differentiation and subdomain weighted residuals as natural low-pass filters, our approach effectively mitigates the fitness landscape distortion that plagues existing methods. A Pareto-driven feedback loop then enables the LLM to iteratively refine candidate equations, balancing predictive accuracy with structural parsimony. We evaluate LLM-PDESR on 23 canonical PDEs and five structurally novel equations (including a multivariate system) specifically designed to preclude dataset memorization and test true discovery capabilities. Demonstrating real-world applicability, the framework successfully extracts a consistent structural skeleton for an interpretable 1D dynamical surrogate (1D-CACE) directly from noisy ERA5 reanalysis data. Extensive experiments and out-of-distribution testing confirm that LLM-PDESR significantly outperforms state-of-the-art methodologies in structural recovery, noise resilience, and the avoidance of spurious complexity and equation bloat.
\end{abstract}

\section{Introduction}

A central pursuit in modern computational science is bridging the gap between abundant empirical measurements and the underlying mathematical laws that dictate system dynamics \cite{rudy2017data,long2018pde,raissi2019physics,both2021deepmod,karniadakis2021physics}. While the formalization of this problem through symbolic regression (SR) has spurred significant advances \cite{dong2025recent,shojaee2023transformer,xiang2025graph}, traditional data-driven methodologies ranging from sparse regression \cite{brunton2016discovering,bottmer2022sparse} to genetic programming \cite{zhong2018multifactorial,du2025deep,mei2022explainable} exhibit critical limitations. Constrained by predefined function libraries, these approaches struggle to scale across vast combinatorial hypothesis spaces and lack the architectural flexibility to incorporate domain-specific physical priors \cite{makke2024interpretable}.

Recently, Large Language Models (LLMs) have emerged as highly promising engines for this task \cite{romera2024mathematical,liu2024rl,wang2023scientific,ai4science2023impact,wang2024scimon}. By treating equations as executable structures, LLMs leverage their pre-trained scientific knowledge to propose physically plausible structural hypotheses \cite{shojaee2024llm}. However, extending these LLM-driven paradigms to the discovery of complex, time-varying PDEs is bottlenecked by severe numerical instabilities. To compute the necessary spatial derivatives for evaluating PDE hypotheses, traditional optimization frameworks predominantly rely on standard finite difference (FD) methods and pointwise residual evaluations \cite{teixeira2023finite,leveque1998finite,patidar2005use}. These discrete operations inherently amplify high-frequency measurement noise, distorting the optimization landscape \cite{othmane2022survey,kaheman2022automatic}. When coupled with an LLM, this numerical failure is fatal: it feeds deceptive fitness scores back to the language model, derailing the evolutionary search and preventing it from distinguishing true physical mechanisms from mathematical artifacts.

To resolve this critical bottleneck, we introduce LLM-PDESR (Large Language Model-guided Partial Differential Equation Symbolic Regression), a novel framework that integrates the structural hypothesis generation of LLMs with a mathematically rigorous evaluation environment. First, we replace noise-sensitive FD methods with quintic spline smoothing. By guaranteeing $\mathcal{C}^4$ continuity, quintic splines provide noise-robust spatial derivatives up to the fourth order \cite{alam2025high,erkorkmaz2005quintic}. Second, we transition from pointwise residuals to a subdomain weighted residual(SWR) evaluation. By integrating the PDE residual against smooth test functions over localized sub-domains, the SWR formulation acts as a natural low-pass filter \cite{messenger2025asymptotic}. This mathematically principled smoothing mitigates fitness landscape distortion, ensuring the internal continuous optimizer converges reliably and provides the LLM with accurate fitness signals. Consequently, a Pareto-driven feedback loop efficiently balances equation complexity and predictive accuracy, naturally pruning unphysical bloat \cite{xu2023pareto,liu2025pareto}.

We evaluate LLM-PDESR on a comprehensive benchmark of 23 canonical PDEs, alongside five structurally novel equations explicitly formulated to preclude dataset memorization and validate genuine symbolic discovery capabilities. Demonstrating real-world applicability, our framework successfully extracts a novel coupled atmospheric circulation surrogate (1D-CACE) directly from noisy ERA5 climate data \cite{hersbach2020era5}. Since capturing such intricate multivariate interactions severely degrades or exceeds existing baseline capabilities, we validate the discovered equations through rigorous out-of-distribution (OOD) testing across distinct spatiotemporal domains. This confirms they capture invariant dynamical mechanisms rather than localized statistical artifacts. Extensive empirical comparisons demonstrate that LLM-PDESR significantly outperforms state-of-the-art methods (including SGA-PDE \cite{chen2022symbolic}, DISCOVER \cite{du2024discover}, WSINDy\_PDE \cite{messenger2021weak}, and EqGPT \cite{xu2025generative}) in structural recovery, noise resilience, and the avoidance of spurious equations.

The main contributions of this work are summarized as follows:
\begin{itemize}[leftmargin=*]
    \item We propose LLM-PDESR, an end-to-end framework that integrates the structural hypothesis generation of LLMs with continuous numerical optimization and evolutionary search to automate PDE discovery.
    
    \item We resolve the derivative noise bottleneck by leveraging $\mathcal{C}^4$ continuous quintic splines and SWR evaluations. This approach mitigates fitness landscape distortion, while a Pareto-driven feedback loop optimally balances predictive accuracy with structural parsimony.
    
    \item We introduce a rigorous benchmark comprising 23 canonical PDEs and five structurally novel equations. This benchmark is explicitly designed to preclude dataset memorization and validate genuine symbolic discovery capabilities.
    
   \item We demonstrate real-world applicability by extracting an interpretable 1D dynamical surrogate (1D-CACE) directly from noisy ERA5 climate data. Rigorous spatiotemporal OOD evaluations confirm that the framework captures invariant dynamical mechanisms rather than localized statistical artifacts.
\end{itemize}

\section{Related work}

\subsection{Dictionary-based methods and deep learning surrogates}

Data-driven PDE discovery initially leveraged sparse regression, exemplified by SINDy \cite{brunton2016discovering}, PDE-FIND, and STRidge \cite{rudy2017data}, which identify governing laws by selecting terms from predefined mathematical libraries. To mitigate the acute sensitivity of numerical differentiation to observation noise, integral-based approaches like WSINDy and WSINDy\_PDE \cite{messenger2021weak} were developed, successfully utilizing weak forms as natural low-pass filters. Concurrently, deep learning architectures such as Physics-Informed Neural Networks (PINNs) \cite{raissi2019physics}, Fourier Neural Operators (FNO) \cite{li2020fourier}, and DeepONet \cite{lu2021learning} have demonstrated remarkable predictive capabilities. However, these neural surrogates typically act as black boxes, precluding explicit equation extraction. Moreover, while sparse regression offers interpretability, its reliance on handcrafted dictionaries imposes a rigid structural bottleneck. This rigidity not only triggers an intractable combinatorial explosion when scaling to highly nonlinear and high-dimensional physical systems, but also lacks the compositional flexibility to incorporate complex, domain-specific priors required for discovering novel, free-form dynamics.

\subsection{Evolutionary and neural symbolic regression}

To overcome the structural rigidity of fixed dictionaries, Genetic Programming (GP) tools like Eureqa \cite{schmidt2009distilling} and PySR \cite{cranmer2023interpretable}, alongside hybrid frameworks such as SGA-PDE \cite{chen2022symbolic} and EPDE \cite{maslyaev2021partial}, explore free-form equations via stochastic mutation. However, these methods are often hindered by expression bloat and inefficient convergence within vast combinatorial search spaces. Neural Symbolic Regression (NSR) paradigms address these search inefficiencies by leveraging reinforcement learning and transformer architectures, as demonstrated by DSR \cite{petersen2019deep}, MORL4PDEs \cite{zhang2024morl4pdes}, and DISCOVER \cite{du2024discover}. More recently, EqGPT \cite{xu2025generative} trained a specialized transformer on mathematical corpora to construct an automated generation-evaluation loop. Yet, these autoregressive models operate within purely syntactic spaces lacking explicit physical priors, and predominantly rely on discrete FDs for evaluation. This reliance drastically amplifies high-frequency noise—particularly in high-order or cross-derivatives—distorting the fitness landscape and degrading robustness. In contrast, LLM-PDESR resolves both limitations: it repurposes the LLM as an active reasoning agent bridging syntax and physics via Pareto-driven feedback, while employing a continuous, noise-robust evaluation environment.

\section{The LLM-PDESR framework}
\label{sec:framework}
\begin{figure}[htbp]
    \centering
    \includegraphics[width=\textwidth]{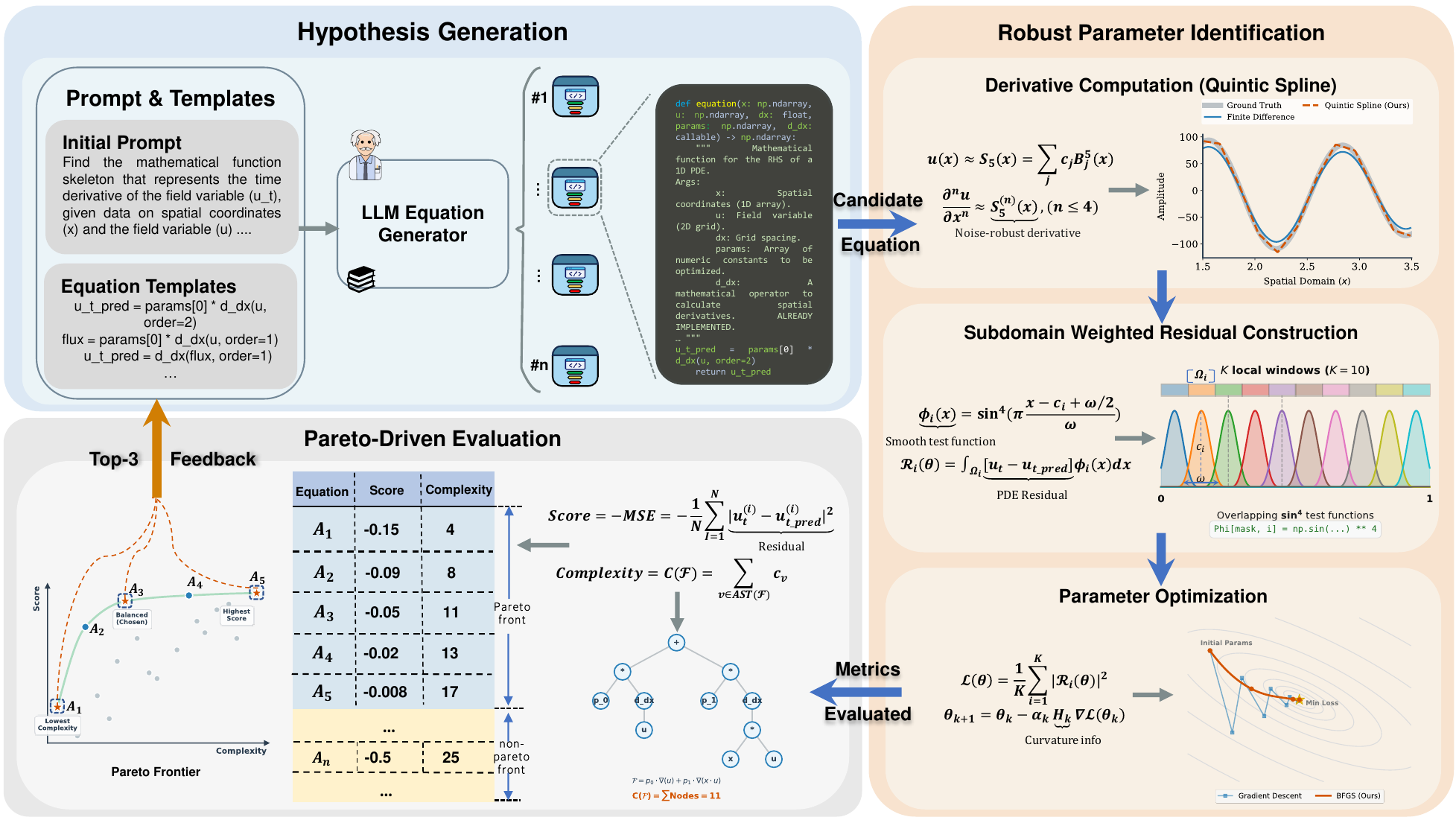} 
    \caption{\textbf{The LLM-PDESR architecture.} The automated pipeline integrates three core modules: (1) LLM-Driven Hypothesis Generation, (2) Robust Parameter Identification via splines, SWR, and BFGS optimization, and (3) Pareto-Driven Evaluation providing dynamic feedback.}
    \label{fig:framework}
\end{figure}

We consider a physical system evolving over a spatial domain $\Omega_x$ and temporal domain $\Omega_t$, characterized by a state field $\mathbf{u}(x,t) \in \mathbb{R}^D$ (e.g., $D=1$ for canonical PDEs like Burgers equation; $D>1$ for coupled fluid dynamics). Given discrete, noisy spatiotemporal observations, our objective is to discover the underlying partial differential operator governing its temporal evolution:
\begin{equation}
    \frac{\partial \mathbf{u}}{\partial t} = \mathcal{N}\left(\mathbf{u}, \frac{\partial \mathbf{u}}{\partial x}, \frac{\partial^2 \mathbf{u}}{\partial x^2}, \dots ; \theta \right)
\end{equation}
where $\mathcal{N}$ represents the unknown structural skeleton to be generated, and $\theta$ denotes the optimizable physical parameters. 

To prevent noise amplification, the target temporal derivative ($\partial \mathbf{u}/\partial t$) is pre-computed via a continuous cubic spline approximation along the time axis.

Traditionally, parameter optimization involves minimizing a pointwise mean squared error (MSE) loss between the observed and predicted temporal derivatives \cite{chicco2021coefficient,qiao2025hybrid}. Evaluating this objective function, however, fundamentally requires approximating the requisite spatial derivatives using discrete FD methods. This dependency introduces a critical evaluation bottleneck: discrete FD inherently amplifies high-frequency observation noise ($\mathcal{O}(\sigma / (\Delta x)^k)$ for $k$-th order derivatives \cite{leveque2007finite}). This numerical instability distorts the fitness landscape, yielding deceptive evaluation scores that derail search algorithms and prevent convergence to the underlying physical laws. 

To resolve this bottleneck, we propose LLM-PDESR. As illustrated in Figure~\ref{fig:framework} and formalized later in Algorithm~\ref{alg:llm_engine}, this automated, closed-loop pipeline addresses PDE discovery through three core modules: LLM-Driven Hypothesis Generation, Robust Parameter Identification, and Pareto-Driven Evaluation.

\begin{algorithm}[tb]
\caption{LLM-PDESR: The End-to-End Discovery Framework}
\label{alg:llm_engine}
\begin{algorithmic}[1]
\Require Observations $\mathcal{D}$, Initial prompt $\mathcal{C}$, Max generations $G$
\Ensure Optimal PDE skeleton $\mathcal{N}^*$ and optimized parameters $\theta^*$
\State \textbf{Initialize:} Feedback context $\mathcal{P}_{fb} \gets \emptyset$, Pareto Frontier $\mathcal{P} \gets \emptyset$

\For{$g = 1$ \textbf{to} $G$}
    \State Sample population of skeletons $\mathcal{S} \sim \text{LLM}(\mathcal{C} \cup \mathcal{P}_{fb})$
    \For{each $\mathcal{N}_j \in \mathcal{S}$}
        \State $\theta_j^*, \text{MSE}_j \gets \text{Numerical\_Evaluation}(\mathcal{N}_j, \mathcal{D})$ \Comment{\textit{See Appx. Alg. \ref{alg:evaluation}}}
        \State Compute AST Complexity $C_j$
    \EndFor
    \State Update global Pareto Frontier $\mathcal{P}$ with evaluated tuples $\{ (\mathcal{N}_j, \theta_j^*, \text{MSE}_j, C_j) \}$
    \State $\mathcal{P}_{fb} \gets \text{Serialize}(\text{Top-3 elites from } \mathcal{P})$ \Comment{Provide LLM feedback}
\EndFor

\State \textit{// Automated parsimony-guided selection within an order-of-magnitude tolerance}
\State $\text{MSE}_{\min} \gets \min_{\mathcal{N} \in \mathcal{P}} \text{MSE}(\mathcal{N})$ 
\State $\mathcal{N}^* \gets \arg\min_{\mathcal{N} \in \mathcal{P}} \{ C(\mathcal{N}) \mid \text{MSE}(\mathcal{N}) \le 10 \times \text{MSE}_{\min} \}$
\State \textbf{Retrieve} $\theta^*$ corresponding to $\mathcal{N}^*$
\State \textbf{return} $\mathcal{N}^*, \theta^*$
\end{algorithmic}
\end{algorithm}

\subsection{LLM-driven hypothesis generation}
Framing the LLM as an automated hypothesis generator, our prompt emulates the empirical discovery pipeline by mapping physical observations (e.g., 1D heat conduction data) to candidate governing equations. To preclude the generation of syntactically or physically invalid structures, this search space is strictly bounded by predefined constraint conditions (e.g., enforcing the spatial derivative operator \texttt{d\_dx(u, order)}). The complete prompts and constraint specifications are detailed in Appendix \ref{Appendix:A}. 

Conditioned on this prompt, the LLM acts as a structural architect, proposing a population of $n$ candidate executable equations (represented as Python Abstract Syntax Trees). Crucially, the LLM generates only the symbolic skeletons, utilizing parameter placeholders (e.g., \texttt{params[0]}) for all numeric coefficients. These $n$ candidate equations are subsequently forwarded to the numerical environment to evaluate their empirical fitness.

\subsection{Robust parameter identification}
The parameter identification module bridges the gap between discrete symbolic generation and continuous observational data through a rigorous three-step numerical pipeline:

\textbf{1. Derivative Computation (Quintic Spline):} To eliminate the aforementioned FD noise bottleneck, we reconstruct the noisy spatial field at each time step $t$ using quintic B-spline smoothing. This enables analytical and noise-robust spatial differentiation:
\begin{equation}
    u(x,t) \approx S_5(x;t) = \sum_{j} c_j(t) B_j^5(x), \quad \frac{\partial^n u}{\partial x^n} \approx S_5^{(n)}(x) \quad (n \le 4)
\end{equation}
where $c_j(t)$ are the spline coefficients and $B_j^5(x)$ denotes the quintic basis functions.

\textbf{2. Subdomain Weighted Residual Construction:} To further fortify the evaluation against localized anomalies, we compute the integral residual $\mathcal{R}_i(t;\theta)$ over $K$ overlapping local sub-domains $\Omega_i$. By integrating against a compactly supported smooth test function $\phi_i(x)$, this formulation inherently acts as a low-pass filter, significantly attenuating high-frequency noise prior to error calculation:
\begin{equation}
    \phi_i(x) = \sin^4\left(\pi \frac{x - c_i + \omega/2}{\omega}\right), \quad \mathcal{R}_i(t;\theta) = \int_{\Omega_i} \left[u_t - u_{t\_\text{pred}}\right] \phi_i(x) dx
\end{equation}
Here, $c_i$ and $\omega$ define the center and width of the local window $\Omega_i$, respectively, and $u_{t\_\text{pred}}$ is the time derivative predicted by the candidate equation parameterized by $\theta$.

\textbf{3. Parameter Optimization via BFGS:} The objective function $\mathcal{L}(\theta)$ is explicitly formulated as the MSE of the integrated residuals. To minimize this non-convex loss, we deploy the quasi-Newton BFGS algorithm \cite{nocedal2006numerical,mannel2024structured}, which iteratively updates the parameters using an approximate inverse Hessian matrix $H_k$:
\begin{equation}
    \mathcal{L}(\theta) = \frac{1}{MK} \sum_{j=1}^M \sum_{i=1}^K | \mathcal{R}_i(t_j; \theta) |^2, \quad \theta_{k+1} = \theta_k - \alpha_k H_k \nabla \mathcal{L}(\theta_k)
\end{equation}
where $M$ and $K$ denote the number of temporal snapshots and spatial sub-domains, respectively; $\alpha_k$ is the step size, and $H_k$ is the approximate inverse Hessian updated via BFGS.

Rigorous theoretical proofs establishing the error bounds of quintic splines over FD, the noise-attenuation properties of the SWR versus point-wise MSE, and the convergence advantages of BFGS in this context are detailed in Appendix \ref{Appendix:B}.

\subsection{Pareto-driven evaluation and automated selection}
Following parameter identification, the optimized candidates are assessed in the Evaluation module. In accordance with Occam's razor, equation discovery is modeled as a multi-objective optimization problem. We map each candidate into a 2D objective space using two strictly defined metrics:
\begin{itemize}[leftmargin=*]
    \item \textbf{Score:} Defined as the negative MSE ($-\text{MSE}$) of the optimized SWR.
    \item \textbf{Complexity:} Quantified as the total number of nodes in the candidate's Abstract Syntax Tree ($C(\mathcal{F}) = \sum_{v \in \text{AST}} c_v$).
\end{itemize}

To guide the LLM's evolutionary search while maintaining context efficiency, we extract three elite solutions from the Pareto frontier $\mathcal{P}$ (highest score, lowest complexity, and balanced) and dynamically inject their serializations into the subsequent prompt. For the final parsimony-guided extraction, we employ a programmatic selection heuristic based on a logarithmic error tolerance. Specifically, we define a subset of highly accurate candidates $\mathcal{P}^* = \{ \mathcal{F} \in \mathcal{P} \mid \text{MSE}(\mathcal{F}) \le 10 \times \text{MSE}_{\min} \}$, representing equations within one order of magnitude of the global minimum error. The algorithm ultimately outputs $\mathcal{F}^* = \arg\min_{\mathcal{F} \in \mathcal{P}^*} C(\mathcal{F})$. This mechanism systematically prunes over-parameterized models while guaranteeing near-optimal predictive accuracy.

\section{Experiments}
\subsection{Experimental setup and baselines}
\textbf{Datasets and Scenarios.} Our evaluation encompasses three distinct regimes: 

(1) \textbf{Benchmark PDEs:} A library of 23 canonical 1D PDEs (e.g., linear, semilinear, Burgers, and high-order nonlinear systems; detailed in Appendix \ref{Appendix:C}). To simulate realistic measurement imperfections, the clean spatial fields $u$ are corrupted with additive Gaussian noise: $\tilde{u} = u + \gamma \sigma_u \epsilon$, where $\sigma_u$ is the standard deviation of $u$, $\epsilon \sim \mathcal{N}(0, 1)$, and $\gamma \in \{0.05, 0.10, 0.20\}$ denotes the noise level. 

(2) \textbf{Structurally Novel Equations:} Five synthetically designed physical systems featuring highly nonlinear spatial terms and unconventional interactions. These are explicitly formulated to preclude dataset memorization and validate genuine symbolic discovery capabilities.

(3) \textbf{Real-World Observations:} Hourly macroscopic wind velocity fields from the ERA5 atmospheric reanalysis dataset \cite{hersbach2020era5}, utilized to extract the interpretable 1D dynamical surrogate (1D-CACE).

\textbf{Baselines and Metrics.} We benchmark our framework against state-of-the-art methodologies spanning evolutionary algorithms, sparse regression, and neural symbolic generation: SGA-PDE \cite{chen2022symbolic}, DISCOVER \cite{du2024discover}, WSINDy\_PDE \cite{messenger2021weak}, and EqGPT \cite{xu2025generative}. Performance is quantitatively assessed using two rigorous metrics: \textit{Structural Success Rate}, strictly defined as the exact symbolic recovery of the true PDE skeleton; and \textit{Relative Coefficient Error}, computed via the $L_2$ norm $\|\hat{\theta} - \theta\|_2 / \|\theta\|_2$ exclusively for successfully identified structures. Detailed experimental configurations and parameter settings for all evaluated methods are provided in Appendix \ref{Appendix:D}.

\subsection{Robustness on benchmark PDEs}
\begin{figure}[htbp]
    \centering
    \includegraphics[width=\textwidth]{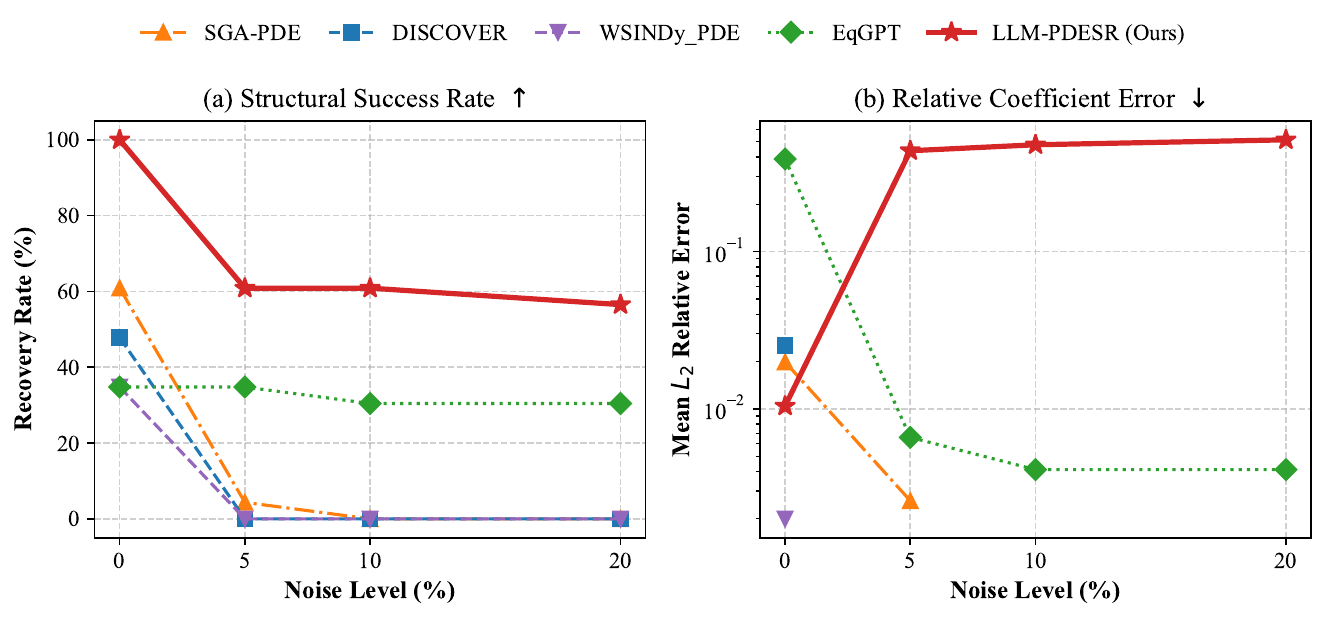} 
    \caption{\textbf{Noise robustness on benchmark PDEs.} Impact of varying observational noise on (a) structural success rate and (b) relative coefficient error.}
    \label{fig:noise_robustness}
\end{figure}

We first evaluate the \textit{Structural Success Rate} (Figure~\ref{fig:noise_robustness}a). LLM-PDESR achieves a 100\% success rate at $0\%$ noise and maintains consistently high robustness as noise increases ($5\%$--$20\%$). Most baselines (e.g., SGA-PDE, DISCOVER, WSINDy\_PDE) suffer severe degradation, while EqGPT retains only partial robustness via explicit data denoising. LLM-PDESR consistently outperforms all competitors, demonstrating the superior noise resilience of our evaluation environment.

Regarding parameter accuracy, Figure~\ref{fig:noise_robustness}b reports the \textit{Relative Coefficient Error} exclusively for successful identifications. At $0\%$ noise, our framework exhibits a low coefficient error. The nominally lower baseline errors in this regime stem from a statistical \textit{survivorship bias}: these methods only converge on the simplest equations. Under noisy conditions, EqGPT exhibits the lowest error due to its external data preprocessing. In contrast, LLM-PDESR ingests raw, noisy observations directly to preserve high-fidelity physical dynamics. Despite omitting heuristic denoising, our framework maintains highly competitive error bounds, confirming that the SWR's intrinsic low-pass filtering effectively isolates true parameters from noise.

\subsection{Structurally novel equations}
Evaluating LLMs exclusively on canonical PDEs inherently risks testing training data memorization rather than genuine symbolic reasoning. To rigorously probe the generalization capabilities of our framework and explicitly mitigate the risk of training data contamination, we designed five structurally novel equations:

\begin{itemize}[leftmargin=*]
    \item \textbf{Topography-Constrained Chemotaxis} ($u_t = 0.1 u_{xx} - 0.5 ( u(1-u) \cos(x) )_x + u(1-u)$): Models biological foraging on undulating terrain with logistic crowding. The nested derivative structure strictly prevents simple additive token generation.
    
    \item \textbf{Morphogenesis with Michaelis-Menten Kinetics} ($u_t = 0.1 u_{xx} - 1.0 u / (0.5 + u) + \exp(-x^2)$): Features a non-polynomial rational fraction and a spatial Gaussian source, a structure typically intractable for traditional polynomial-biased sparse regression.
    
    \item \textbf{Forced Quintic Swift-Hohenberg} ($u_t = 1.5\exp(-0.1x^2)u - 2u_{xx} - u_{xxxx} - u^3 + u^5$): A synthetic, out-of-distribution model inspired by optical microcavity pattern formation. It features a 5th-order nonlinearity and 4th-order spatial derivatives, explicitly designed to break standard physical symmetries.
    
    \item \textbf{Traffic Flow with Spatial Bottleneck} ($u_t = -( u(1-u)(1 - 0.5\tanh(x)) )_x + 0.05(u_x/u)_x$): A macroscopic traffic model incorporating a physical road narrowing ($\tanh$) and driver anticipation gradients.
    
    \item \textbf{Cross-Chemotactic Predator-Prey System} ($u_t = 0.05 u_{xx} + u(1 - u - 0.5v)$ and $v_t = 0.01 v_{xx} - 0.2(v u_x)_x + 0.8v(u - 1)$): A coupled multivariate system where predators ($v$) actively hunt by moving along the prey's ($u$) density gradient.
\end{itemize}

\begin{table}[htbp]
    \centering
    \caption{\textbf{Evaluation on structurally novel equations.} Assessed via Recovery Status (\textbf{Stat.}: $\color{ForestGreen}{\bigstar}$ Exact, $\color{orange}{\mathbf{\approx}}$ Partial, $\color{red}{\boldsymbol{\times}}$ Fail), \textbf{MSE}, and complexity (\textbf{C}).}
    \label{tab:16col_evaluation}
    \renewcommand{\arraystretch}{1.3}
    \setlength{\tabcolsep}{2pt}
    \small
    \begin{tabular}{l ccc ccc ccc ccc ccc}
        \toprule
        
        \multirow{2}{*}{\textbf{Algorithm}} 
        & \multicolumn{3}{c}{\begin{tabular}{@{}c@{}}\textbf{Topography} \\ \textbf{Chemotaxis}\end{tabular}} 
        & \multicolumn{3}{c}{\begin{tabular}{@{}c@{}}\textbf{Morpho-} \\ \textbf{genesis}\end{tabular}} 
        & \multicolumn{3}{c}{\begin{tabular}{@{}c@{}}\textbf{Forced Swift-} \\ \textbf{Hohenberg}\end{tabular}} 
        & \multicolumn{3}{c}{\begin{tabular}{@{}c@{}}\textbf{Traffic Flow} \\ \textbf{Bottleneck}\end{tabular}} 
        & \multicolumn{3}{c}{\begin{tabular}{@{}c@{}}\textbf{Predator-Prey} \\ \textbf{(Coupled)}\end{tabular}} \\
        
        \cmidrule(lr){2-4} \cmidrule(lr){5-7} \cmidrule(lr){8-10} \cmidrule(lr){11-13} \cmidrule(lr){14-16}
        
        & Stat. & MSE & $C$ & Stat. & MSE & $C$ & Stat. & MSE & $C$ & Stat. & MSE & $C$ & Stat. & MSE & $C$ \\
        \midrule
        
        SGA-PDE
        & $\color{red}{\boldsymbol{\times}}$ & 4.3e-4 & 20 
        & $\color{red}{\boldsymbol{\times}}$ & 4.4e-3 & 19 
        & $\color{red}{\boldsymbol{\times}}$ & 1.74e-3 & \textbf{1} 
        & $\color{red}{\boldsymbol{\times}}$ & 7.1e-4 & 14 
        & $\color{red}{\boldsymbol{\times}}$ & N/A & -- \\
        
        DISCOVER 
        & $\color{red}{\boldsymbol{\times}}$ & 4.2e-4 & 15 
        & $\color{red}{\boldsymbol{\times}}$ & 5.6e-4 & 17 
        & $\color{red}{\boldsymbol{\times}}$ & 4.4e-4 & 28 
        & $\color{red}{\boldsymbol{\times}}$ & 3.1e-4 & 29 
        & $\color{red}{\boldsymbol{\times}}$ & N/A &  -- \\
        
        WSINDy\_PDE 
        & $\color{red}{\boldsymbol{\times}}$ & 1.1e-1 & 106 
        & $\color{red}{\boldsymbol{\times}}$ & 5.4e-3 & 234 
        & $\color{red}{\boldsymbol{\times}}$ & 1.8e-3 & 234 
        & $\color{red}{\boldsymbol{\times}}$ & 6.9e-2 & 234 
        & $\color{red}{\boldsymbol{\times}}$ & 1.7e-3 & 203 \\
        
        EqGPT 
        & $\color{red}{\boldsymbol{\times}}$ & 1.6e-3 & \textbf{10} 
        & $\color{red}{\boldsymbol{\times}}$ & 5.1e-3 & \textbf{16} 
        & $\color{red}{\boldsymbol{\times}}$ & 5.1e-3 & 7
        & $\color{red}{\boldsymbol{\times}}$ & 6.3e-4 & \textbf{11} 
        & $\color{red}{\boldsymbol{\times}}$ & N/A & --  \\
        
        \midrule
        
        \textbf{LLM-PDESR} 
        & $\color{orange}{\mathbf{\approx}}$ & \textbf{7.6e-5} & 16 
        & $\color{ForestGreen}{\bigstar}$ & \textbf{1.2e-10} & 21 
        & $\color{orange}{\mathbf{\approx}}$ & \textbf{2.2e-4} & 26 
        & $\color{red}{\boldsymbol{\times}}$ & \textbf{1.6e-4} & 23 
        & $\color{ForestGreen}{\bigstar}$ & \textbf{4.4e-9} & \textbf{47}\\
        
        \bottomrule
    \end{tabular}%
    
\end{table}

As Table~\ref{tab:16col_evaluation} reports, the evaluated baselines universally fail to recover these structurally novel equations. Specifically, WSINDy\_PDE succumbs to combinatorial explosion and extreme overfitting (e.g., yielding an AST complexity of 203 with high MSE for Predator-Prey). Meanwhile, SGA-PDE, DISCOVER, and EqGPT suffer from structural collapse and fundamentally fail to discover the correct forms. Furthermore, as most baselines lack the architectural support for multivariate systems, they are inherently inapplicable to the coupled Predator-Prey dynamics (denoted as N/A).


Conversely, LLM-PDESR successfully navigates this vast hypothesis space. By accommodating inherent structural complexities, our framework extracts the exact mathematical skeletons ($\color{ForestGreen}{\bigstar}$) for both Morphogenesis and the coupled Predator-Prey dynamics without generating spurious terms. Notably, even when exact symbolic recovery is unachieved for highly nonlinear systems like the Traffic Flow Bottleneck ($\color{red}{\boldsymbol{\times}}$), LLM-PDESR consistently secures the lowest MSE across all five systems, demonstrating superior capability in approximating underlying physical dynamics.

\textit{Remark on Partial Recovery:} For Topography Chemotaxis and Forced Swift-Hohenberg, LLM-PDESR achieves partial recovery ($\color{orange}{\mathbf{\approx}}$), strictly defined as correctly identifying $\ge 50\%$ of the exact governing terms. Rather than producing unphysical terms when faced with complex interactions, our framework exhibits exceptional structural robustness via Pareto-driven optimization. It reliably isolates core physical mechanisms and prevents mathematically invalid bloat, even when the complete skeleton proves exceptionally intricate. A comprehensive, term-by-term analysis and further detailed discussions regarding the recovered structures for all five novel systems are provided in Appendix \ref{Appendix:E}.

\subsection{Real-world discovery: coupled atmospheric dynamics}

Evaluating symbolic regression on authentic reanalysis data like ERA5 is notoriously challenging, as the algorithm must contend with severe dimensionality truncation (unresolved 3D thermodynamic variables) and high-frequency data assimilation noise. To demonstrate our framework's capability in this extreme regime, we deployed LLM-PDESR on real-world ERA5 atmospheric data (North Pacific domain; Jan 2023) \cite{munoz2021era5,lavers2022evaluation,soci2024era5}, autonomously extracting an interpretable 1D coupled atmospheric circulation surrogate (1D-CACE):

\begin{equation}
\label{eq:cace}
\begin{aligned}
    u_t &= \theta_1 u_{xx} + \theta_2 u u_x + \theta_3 v u_x + \theta_4 v_x + \theta_5 u + \theta_6 v + \theta_7 x v \\
    v_t &= \theta_8 v_{xx} + \theta_9 v v_x + \theta_{10} u v_x + \theta_{11} u_x + \theta_{12} v + \theta_{13} u + \theta_{14} x u
\end{aligned}
\end{equation}

where $x$ is the spatial coordinate and $\theta_{1}\dots\theta_{14}$ are optimizable physical parameters. Although applicable to multivariate systems, WSINDy\_PDE yields a severely overfitted, unphysical expression here, omitted for brevity.

Unlike standard 1D models (e.g., Burgers equation) limited to self-advection, 1D-CACE explicitly captures cross-wind momentum transfer ($v u_x,  u v_x$) and linear coupling ($ v_x,  u_x$). From a Reduced Order Modeling (ROM) perspective, projecting complex 3D atmospheric flows onto a 1D manifold inevitably loses external physical forcings. Remarkably, LLM-PDESR autonomously compensates for this by discovering spatially dependent terms ($x v$ and $x u$). Rather than just overfitting localized noise, these act as vital mathematical closure terms, implicitly proxying unresolved macroscopic background gradients (analogous to the well-known $\beta$-plane approximation of the Coriolis force). Detailed physical derivations of these compensation terms are provided in Appendix \ref{Appendix:F}.

This dynamical consistency ensures strong out-of-distribution (OOD) generalization. By freezing the structural skeleton of Eq.~\eqref{eq:cace} and re-optimizing only the numeric coefficients, the equation transfers effectively to distinct spatiotemporal domains (Table~\ref{tab:ood_generalization}).

\begin{table}[ht]
  \caption{\textbf{OOD generalization of the 1D-CACE skeleton.} The structural skeleton is frozen while numeric parameters are re-optimized for each regime to accommodate altered background states.}
  \label{tab:ood_generalization}
  \centering
  \renewcommand{\arraystretch}{1.3}
  \begin{tabular}{llllc}
    \toprule
    \textbf{Evaluation Regime} & \textbf{Time} & \textbf{Region} & \textbf{Type} & \textbf{$R^2$} \\
    \midrule
    North Pacific (Base) & Jan 1--10 & 25--45$^\circ$N, 160--130$^\circ$W & In-Distribution & \textbf{76.28\%} \\
    West Pacific         & Jan 1--10 & 25--45$^\circ$N, 140--170$^\circ$E & Spatial OOD     & 60.97\% \\
    North Pacific        & Feb 1--10 & 25--45$^\circ$N, 160--130$^\circ$W & Temporal Drift  & 58.64\% \\
    South Indian Ocean   & Jul 1--10 & 40--60$^\circ$S, 60--90$^\circ$E   & \textbf{Hemispheric} & \textbf{72.54\%} \\
    \bottomrule
  \end{tabular}
\end{table}

\begin{figure}[htbp]
    \centering
    \includegraphics[width=\textwidth]{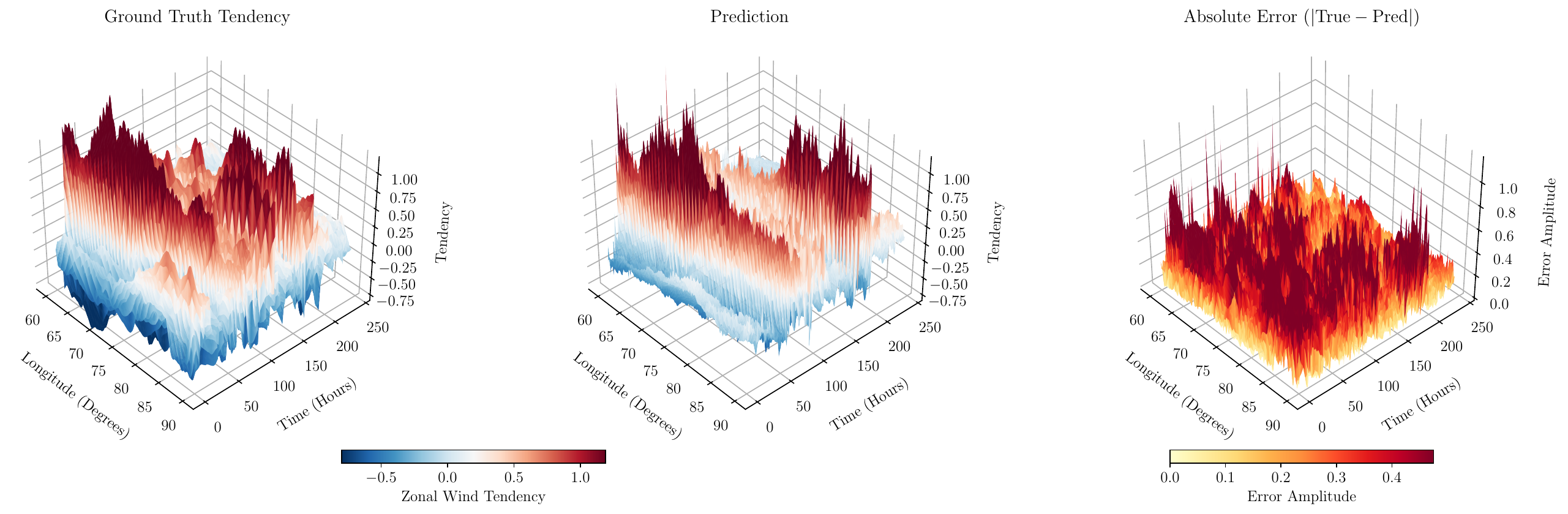} 
    \caption{\textbf{Hemispheric OOD predictions of 1D-CACE.} The model robustly reconstructs South Indian Ocean dynamics (austral winter) despite severe geographical shifts.}
    \label{fig:era5_heatmaps}
\end{figure}

Notably, while temporal drift to February causes a performance drop ($R^2=58.64\%$) due to transitional late-winter dynamics, the geographically extreme Hemispheric OOD setting (South Indian Ocean, July) rebounds significantly to $72.54\%$. Because July represents the austral winter, this recovery empirically confirms that 1D-CACE successfully captures the macroscopic dynamics of \textit{winter atmospheric circulation}, rather than memorizing geographical artifacts. Figure~\ref{fig:era5_heatmaps} visualizes this high-fidelity hemispheric reconstruction, verifying that LLM-PDESR effectively distills interpretable governing dynamics from chaotic real-world data.

\subsection{Ablation studies}
\label{sec:ablation}

\begin{figure}[htbp]
    \centering
    \includegraphics[width=\textwidth]{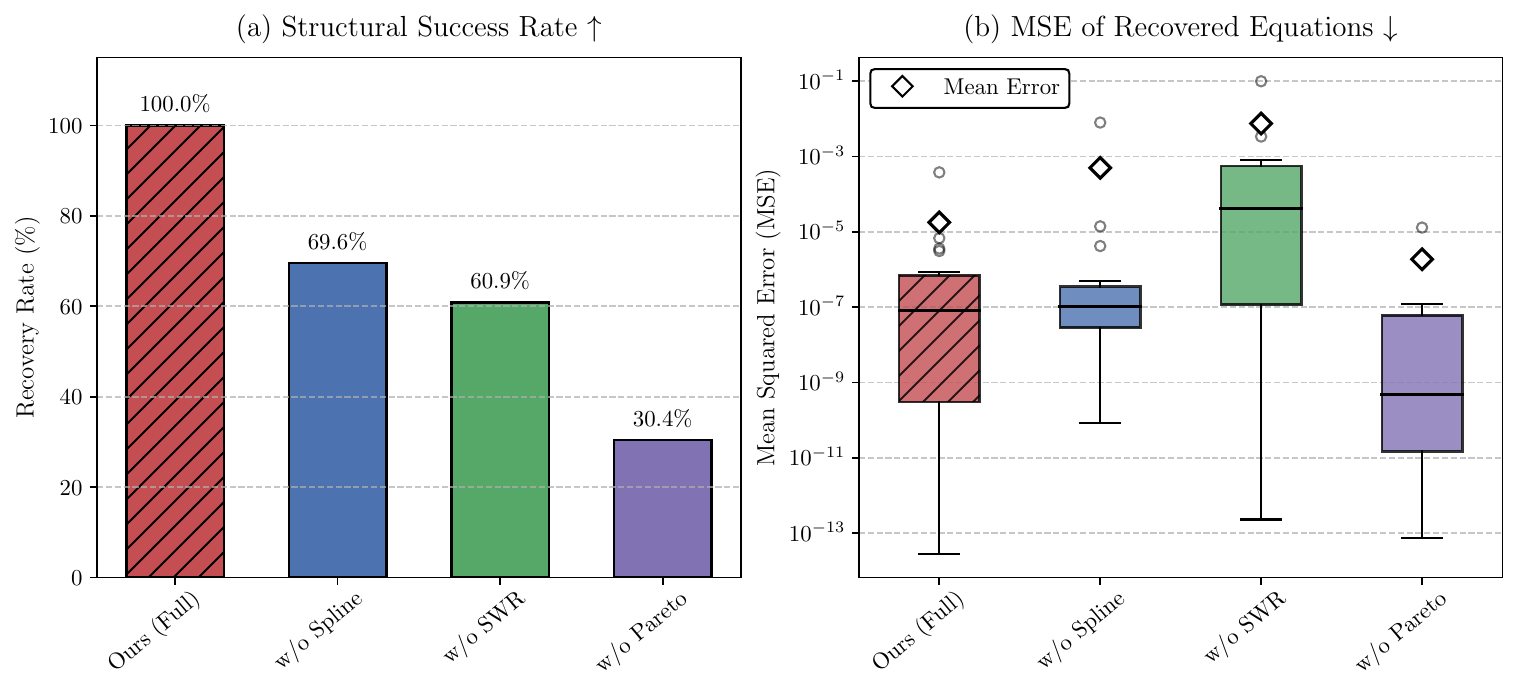} 
    \caption{\textbf{Ablation study of the LLM-PDESR framework.} Impact of removing core components on (a) structural success rate and (b) MSE distributions under noise-free conditions.}
    \label{fig:ablation}
\end{figure}

To systematically evaluate the individual contribution of each core component, we conduct an ablation study using noise-free data. We compare the full LLM-PDESR framework against three ablated variants: (1) w/o Spline (replacing quintic splines with standard FDs), (2) w/o SWR (replacing local integral residuals with pointwise MSE), and (3) w/o Pareto (removing the multi-objective selection, retaining only MSE as the fitness metric).

Figure~\ref{fig:ablation}(a) illustrates the significant impact of removing any single module on the \textit{Structural Success Rate}. While the full framework achieves a $100\%$ recovery on the benchmark PDEs, reverting to discrete FDs (w/o Spline) or pointwise evaluation (w/o SWR) significantly degrades success rates to $69.6\%$ and $60.9\%$, respectively. 

Figure~\ref{fig:ablation}(b) elucidates the underlying optimization dynamics through the MSE distributions. Without splines, numerical differentiation errors rapidly accumulate. Similarly, without the SWR's inherent smoothing, the fitness landscape is severely distorted by high-frequency numerical artifacts and truncation errors (even in the absence of observation noise), resulting in high MSE variances and optimization failures. Most crucially, the w/o Pareto variant exhibits a deceptive phenomenon: while its median MSE appears competitively low, its structural recovery remains poor. This corroborates our theoretical analysis: without a strict complexity penalty, the LLM inevitably falls into an overfitting trap. It produces bloated, mathematically spurious equations that artificially minimize the residual error, completely failing to execute genuine symbolic discovery.
\section{Conclusion and future work}
\label{sec:conclusion}

In this work, we introduce LLM-PDESR, a novel paradigm bridging the structural hypothesis generation of LLMs with the numerical rigor required for robust PDE discovery. By integrating $\mathcal{C}^4$ continuous quintic splines, SWR evaluations, and a Pareto-driven automated selection loop, our framework effectively mitigates the fitness landscape distortion and the generation of spurious equations that plague traditional methods. Empirically, it achieves state-of-the-art performance across canonical benchmarks and structurally novel systems (explicitly designed to mitigate the risk of training data contamination), while demonstrating practical utility by extracting an interpretable 1D coupled atmospheric circulation surrogate (1D-CACE) directly from noisy ERA5 observations. While our current formulation successfully distills macroscopic dynamics from noisy real-world environments, future work will focus on extending this framework to high-dimensional (2D/3D) physical systems on unstructured grids and optimizing the computational overhead of iterative LLM inference, thereby further advancing the frontier of automated scientific discovery.

\section{Acknowledgments}
This work was supported by the National Science and Technology
Major Project under Grant 2021ZD0112500. We thank the Computing
Center of Jilin Province for technical support.

{
\small 
\bibliographystyle{unsrtnat} 
\bibliography{refs} 

\begin{thebibliography}{50}
\providecommand{\natexlab}[1]{#1}
\providecommand{\url}[1]{\texttt{#1}}
\expandafter\ifx\csname urlstyle\endcsname\relax
  \providecommand{\doi}[1]{doi: #1}\else
  \providecommand{\doi}{doi: \begingroup \urlstyle{rm}\Url}\fi

\bibitem[Rudy et~al.(2017)Rudy, Brunton, Proctor, and Kutz]{rudy2017data}
Samuel~H Rudy, Steven~L Brunton, Joshua~L Proctor, and J~Nathan Kutz.
\newblock Data-driven discovery of partial differential equations.
\newblock \emph{Science advances}, 3\penalty0 (4):\penalty0 e1602614, 2017.

\bibitem[Long et~al.(2018)Long, Lu, Ma, and Dong]{long2018pde}
Zichao Long, Yiping Lu, Xianzhong Ma, and Bin Dong.
\newblock Pde-net: Learning pdes from data.
\newblock In \emph{International conference on machine learning}, pages
  3208--3216. PMLR, 2018.

\bibitem[Raissi et~al.(2019)Raissi, Perdikaris, and
  Karniadakis]{raissi2019physics}
Maziar Raissi, Paris Perdikaris, and George~E Karniadakis.
\newblock Physics-informed neural networks: A deep learning framework for
  solving forward and inverse problems involving nonlinear partial differential
  equations.
\newblock \emph{Journal of Computational physics}, 378:\penalty0 686--707,
  2019.

\bibitem[Both et~al.(2021)Both, Choudhury, Sens, and Kusters]{both2021deepmod}
Gert-Jan Both, Subham Choudhury, Pierre Sens, and Remy Kusters.
\newblock Deepmod: Deep learning for model discovery in noisy data.
\newblock \emph{Journal of Computational Physics}, 428:\penalty0 109985, 2021.

\bibitem[Karniadakis et~al.(2021)Karniadakis, Kevrekidis, Lu, Perdikaris, Wang,
  and Yang]{karniadakis2021physics}
George~Em Karniadakis, Ioannis~G Kevrekidis, Lu~Lu, Paris Perdikaris, Sifan
  Wang, and Liu Yang.
\newblock Physics-informed machine learning.
\newblock \emph{Nature Reviews Physics}, 3\penalty0 (6):\penalty0 422--440,
  2021.

\bibitem[Dong and Zhong(2025)]{dong2025recent}
Junlan Dong and Jinghui Zhong.
\newblock Recent advances in symbolic regression.
\newblock \emph{ACM Computing Surveys}, 57\penalty0 (11):\penalty0 1--37, 2025.

\bibitem[Shojaee et~al.(2023)Shojaee, Meidani, Barati~Farimani, and
  Reddy]{shojaee2023transformer}
Parshin Shojaee, Kazem Meidani, Amir Barati~Farimani, and Chandan Reddy.
\newblock Transformer-based planning for symbolic regression.
\newblock \emph{Advances in Neural Information Processing Systems},
  36:\penalty0 45907--45919, 2023.

\bibitem[Xiang et~al.(2025)Xiang, Ashen, Qian, and Qian]{xiang2025graph}
Ziyu Xiang, Kenna Ashen, Xiaofeng Qian, and Xiaoning Qian.
\newblock Graph-based symbolic regression with invariance and constraint
  encoding.
\newblock In \emph{The Thirty-ninth Annual Conference on Neural Information
  Processing Systems}, 2025.

\bibitem[Brunton et~al.(2016)Brunton, Proctor, and
  Kutz]{brunton2016discovering}
Steven~L Brunton, Joshua~L Proctor, and J~Nathan Kutz.
\newblock Discovering governing equations from data by sparse identification of
  nonlinear dynamical systems.
\newblock \emph{Proceedings of the national academy of sciences}, 113\penalty0
  (15):\penalty0 3932--3937, 2016.

\bibitem[Bottmer et~al.(2022)Bottmer, Croux, and Wilms]{bottmer2022sparse}
Lea Bottmer, Christophe Croux, and Ines Wilms.
\newblock Sparse regression for large data sets with outliers.
\newblock \emph{European Journal of Operational Research}, 297\penalty0
  (2):\penalty0 782--794, 2022.

\bibitem[Zhong et~al.(2018)Zhong, Feng, Cai, and Ong]{zhong2018multifactorial}
Jinghui Zhong, Liang Feng, Wentong Cai, and Yew-Soon Ong.
\newblock Multifactorial genetic programming for symbolic regression problems.
\newblock \emph{IEEE transactions on systems, man, and cybernetics: systems},
  50\penalty0 (11):\penalty0 4492--4505, 2018.

\bibitem[Du et~al.(2025)Du, Liu, Cheng, Li, and Yu]{du2025deep}
Jinyang Du, Renyun Liu, Du~Cheng, Qingliang Li, and Fanhua Yu.
\newblock Deep recognition of partial differential equations based on
  reinforcement learning and genetic algorithm.
\newblock \emph{The Journal of Supercomputing}, 81\penalty0 (5):\penalty0 740,
  2025.

\bibitem[Mei et~al.(2022)Mei, Chen, Lensen, Xue, and Zhang]{mei2022explainable}
Yi~Mei, Qi~Chen, Andrew Lensen, Bing Xue, and Mengjie Zhang.
\newblock Explainable artificial intelligence by genetic programming: A survey.
\newblock \emph{IEEE Transactions on Evolutionary Computation}, 27\penalty0
  (3):\penalty0 621--641, 2022.

\bibitem[Makke and Chawla(2024)]{makke2024interpretable}
Nour Makke and Sanjay Chawla.
\newblock Interpretable scientific discovery with symbolic regression: a
  review.
\newblock \emph{Artificial Intelligence Review}, 57\penalty0 (1):\penalty0 2,
  2024.

\bibitem[Romera-Paredes et~al.(2024)Romera-Paredes, Barekatain, Novikov, Balog,
  Kumar, Dupont, Ruiz, Ellenberg, Wang, Fawzi, et~al.]{romera2024mathematical}
Bernardino Romera-Paredes, Mohammadamin Barekatain, Alexander Novikov, Matej
  Balog, M~Pawan Kumar, Emilien Dupont, Francisco~JR Ruiz, Jordan~S Ellenberg,
  Pengming Wang, Omar Fawzi, et~al.
\newblock Mathematical discoveries from program search with large language
  models.
\newblock \emph{Nature}, 625\penalty0 (7995):\penalty0 468--475, 2024.

\bibitem[Liu et~al.(2024)Liu, Yuan, Hu, Li, Chen, Liu, Lu, and Jia]{liu2024rl}
Shaoteng Liu, Haoqi Yuan, Minda Hu, Yanwei Li, Yukang Chen, Shu Liu, Zongqing
  Lu, and Jiaya Jia.
\newblock Rl-gpt: Integrating reinforcement learning and code-as-policy.
\newblock \emph{Advances in Neural Information Processing Systems},
  37:\penalty0 28430--28459, 2024.

\bibitem[Wang et~al.(2023)Wang, Fu, Du, Gao, Huang, Liu, Chandak, Liu,
  Van~Katwyk, Deac, et~al.]{wang2023scientific}
Hanchen Wang, Tianfan Fu, Yuanqi Du, Wenhao Gao, Kexin Huang, Ziming Liu, Payal
  Chandak, Shengchao Liu, Peter Van~Katwyk, Andreea Deac, et~al.
\newblock Scientific discovery in the age of artificial intelligence.
\newblock \emph{Nature}, 620\penalty0 (7972):\penalty0 47--60, 2023.

\bibitem[AI4Science and Quantum(2023)]{ai4science2023impact}
Microsoft~Research AI4Science and Microsoft~Azure Quantum.
\newblock The impact of large language models on scientific discovery: a
  preliminary study using gpt-4.
\newblock \emph{arXiv preprint arXiv:2311.07361}, 2023.

\bibitem[Wang et~al.(2024)Wang, Downey, Ji, and Hope]{wang2024scimon}
Qingyun Wang, Doug Downey, Heng Ji, and Tom Hope.
\newblock Scimon: Scientific inspiration machines optimized for novelty.
\newblock In \emph{Proceedings of the 62nd Annual Meeting of the Association
  for Computational Linguistics (Volume 1: Long Papers)}, pages 279--299, 2024.

\bibitem[Shojaee et~al.(2024)Shojaee, Meidani, Gupta, Farimani, and
  Reddy]{shojaee2024llm}
Parshin Shojaee, Kazem Meidani, Shashank Gupta, Amir~Barati Farimani, and
  Chandan~K Reddy.
\newblock Llm-sr: Scientific equation discovery via programming with large
  language models.
\newblock \emph{arXiv preprint arXiv:2404.18400}, 2024.

\bibitem[Teixeira et~al.(2023)Teixeira, Sarris, Zhang, Na, Berenger, Su,
  Okoniewski, Chew, Backman, and Simpson]{teixeira2023finite}
FL~Teixeira, C~Sarris, Y~Zhang, D-Y Na, J-P Berenger, Y~Su, M~Okoniewski,
  WC~Chew, V~Backman, and Jamesina~J Simpson.
\newblock Finite-difference time-domain methods.
\newblock \emph{Nature Reviews Methods Primers}, 3\penalty0 (1):\penalty0 75,
  2023.

\bibitem[LeVeque(1998)]{leveque1998finite}
Randall~J LeVeque.
\newblock Finite difference methods for differential equations.
\newblock \emph{Draft version for use in AMath}, 585\penalty0 (6):\penalty0
  112, 1998.

\bibitem[Patidar(2005)]{patidar2005use}
Kailash~C Patidar.
\newblock On the use of nonstandard finite difference methods.
\newblock \emph{Journal of Difference Equations and Applications}, 11\penalty0
  (8):\penalty0 735--758, 2005.

\bibitem[Othmane et~al.(2022)Othmane, Kiltz, and Rudolph]{othmane2022survey}
Amine Othmane, Lothar Kiltz, and Joachim Rudolph.
\newblock Survey on algebraic numerical differentiation: historical
  developments, parametrization, examples, and applications.
\newblock \emph{International Journal of Systems Science}, 53\penalty0
  (9):\penalty0 1848--1887, 2022.

\bibitem[Kaheman et~al.(2022)Kaheman, Brunton, and
  Nathan~Kutz]{kaheman2022automatic}
Kadierdan Kaheman, Steven~L Brunton, and J~Nathan~Kutz.
\newblock Automatic differentiation to simultaneously identify nonlinear
  dynamics and extract noise probability distributions from data.
\newblock \emph{Machine Learning: Science and Technology}, 3\penalty0
  (1):\penalty0 015031, 2022.

\bibitem[Alam et~al.(2025)Alam, Khan, and Roul]{alam2025high}
Mohammad~Prawesh Alam, Arshad Khan, and Pradip Roul.
\newblock High-resolution numerical method for the time-fractional fourth-order
  diffusion problems via improved quintic b-spline function.
\newblock \emph{Journal of Applied Mathematics and Computing}, 71\penalty0
  (1):\penalty0 133--171, 2025.

\bibitem[Erkorkmaz and Altintas(2005)]{erkorkmaz2005quintic}
Kaan Erkorkmaz and Yusuf Altintas.
\newblock Quintic spline interpolation with minimal feed fluctuation.
\newblock \emph{J. Manuf. Sci. Eng.}, 127\penalty0 (2):\penalty0 339--349,
  2005.

\bibitem[Messenger and Bortz(2025)]{messenger2025asymptotic}
Daniel~A Messenger and David~M Bortz.
\newblock Asymptotic consistency of the wsindy algorithm in the limit of
  continuum data.
\newblock \emph{IMA Journal of Numerical Analysis}, 45\penalty0 (6):\penalty0
  3264--3312, 2025.

\bibitem[Xu et~al.(2023)Xu, Zhang, Huang, Qu, and Nojima]{xu2023pareto}
Ying Xu, Huan Zhang, Lei Huang, Rong Qu, and Yusuke Nojima.
\newblock A pareto front grid guided multi-objective evolutionary algorithm.
\newblock \emph{Applied Soft Computing}, 136:\penalty0 110095, 2023.

\bibitem[Liu et~al.(2025)Liu, Wu, Huang, Gao, Wang, Xue, and
  Qian]{liu2025pareto}
Erlong Liu, Yu-Chang Wu, Xiaobin Huang, Chengrui Gao, Ren-Jian Wang, Ke~Xue,
  and Chao Qian.
\newblock Pareto set learning for multi-objective reinforcement learning.
\newblock In \emph{Proceedings of the AAAI Conference on Artificial
  Intelligence}, volume~39, pages 18789--18797, 2025.

\bibitem[Hersbach et~al.(2020)Hersbach, Bell, Berrisford, Hirahara,
  Hor{\'a}nyi, Mu{\~n}oz-Sabater, Nicolas, Peubey, Radu, Schepers,
  et~al.]{hersbach2020era5}
Hans Hersbach, Bill Bell, Paul Berrisford, Shoji Hirahara, Andr{\'a}s
  Hor{\'a}nyi, Joaqu{\'\i}n Mu{\~n}oz-Sabater, Julien Nicolas, Carole Peubey,
  Raluca Radu, Dinand Schepers, et~al.
\newblock The era5 global reanalysis.
\newblock \emph{Quarterly journal of the royal meteorological society},
  146\penalty0 (730):\penalty0 1999--2049, 2020.

\bibitem[Chen et~al.(2022)Chen, Luo, Liu, Xu, and Zhang]{chen2022symbolic}
Yuntian Chen, Yingtao Luo, Qiang Liu, Hao Xu, and Dongxiao Zhang.
\newblock Symbolic genetic algorithm for discovering open-form partial
  differential equations (sga-pde).
\newblock \emph{Physical Review Research}, 4\penalty0 (2):\penalty0 023174,
  2022.

\bibitem[Du et~al.(2024)Du, Chen, and Zhang]{du2024discover}
Mengge Du, Yuntian Chen, and Dongxiao Zhang.
\newblock Discover: Deep identification of symbolically concise open-form
  partial differential equations via enhanced reinforcement learning.
\newblock \emph{Physical Review Research}, 6\penalty0 (1):\penalty0 013182,
  2024.

\bibitem[Messenger and Bortz(2021)]{messenger2021weak}
Daniel~A Messenger and David~M Bortz.
\newblock Weak sindy for partial differential equations.
\newblock \emph{Journal of Computational Physics}, 443:\penalty0 110525, 2021.

\bibitem[Xu et~al.(2025)Xu, Chen, Cao, Tang, Du, Li, Callaghan, and
  Zhang]{xu2025generative}
Hao Xu, Yuntian Chen, Rui Cao, Tianning Tang, Mengge Du, Jian Li, Adrian~H
  Callaghan, and Dongxiao Zhang.
\newblock Generative discovery of partial differential equations by learning
  from math handbooks.
\newblock \emph{Nature Communications}, 16\penalty0 (1):\penalty0 10255, 2025.

\bibitem[Li et~al.(2020)Li, Kovachki, Azizzadenesheli, Liu, Bhattacharya,
  Stuart, and Anandkumar]{li2020fourier}
Zongyi Li, Nikola Kovachki, Kamyar Azizzadenesheli, Burigede Liu, Kaushik
  Bhattacharya, Andrew Stuart, and Anima Anandkumar.
\newblock Fourier neural operator for parametric partial differential
  equations.
\newblock \emph{arXiv preprint arXiv:2010.08895}, 2020.

\bibitem[Lu et~al.(2021)Lu, Jin, Pang, Zhang, and Karniadakis]{lu2021learning}
Lu~Lu, Pengzhan Jin, Guofei Pang, Zhongqiang Zhang, and George~Em Karniadakis.
\newblock Learning nonlinear operators via deeponet based on the universal
  approximation theorem of operators.
\newblock \emph{Nature machine intelligence}, 3\penalty0 (3):\penalty0
  218--229, 2021.

\bibitem[Schmidt and Lipson(2009)]{schmidt2009distilling}
Michael Schmidt and Hod Lipson.
\newblock Distilling free-form natural laws from experimental data.
\newblock \emph{science}, 324\penalty0 (5923):\penalty0 81--85, 2009.

\bibitem[Cranmer(2023)]{cranmer2023interpretable}
Miles Cranmer.
\newblock Interpretable machine learning for science with pysr and
  symbolicregression. jl.
\newblock \emph{arXiv preprint arXiv:2305.01582}, 2023.

\bibitem[Maslyaev et~al.(2021)Maslyaev, Hvatov, and
  Kalyuzhnaya]{maslyaev2021partial}
Mikhail Maslyaev, Alexander Hvatov, and Anna~V Kalyuzhnaya.
\newblock Partial differential equations discovery with epde framework:
  Application for real and synthetic data.
\newblock \emph{Journal of Computational Science}, 53:\penalty0 101345, 2021.

\bibitem[Petersen et~al.(2019)Petersen, Landajuela, Mundhenk, Santiago, Kim,
  and Kim]{petersen2019deep}
Brenden~K Petersen, Mikel Landajuela, T~Nathan Mundhenk, Claudio~P Santiago,
  Soo~K Kim, and Joanne~T Kim.
\newblock Deep symbolic regression: Recovering mathematical expressions from
  data via risk-seeking policy gradients.
\newblock \emph{arXiv preprint arXiv:1912.04871}, 2019.

\bibitem[Zhang et~al.(2024)Zhang, Guan, Liu, and Wang]{zhang2024morl4pdes}
Xiaoxia Zhang, Junsheng Guan, Yanjun Liu, and Guoyin Wang.
\newblock Morl4pdes: Data-driven discovery of pdes based on multi-objective
  optimization and reinforcement learning.
\newblock \emph{Chaos, Solitons \& Fractals}, 180:\penalty0 114536, 2024.

\bibitem[Chicco et~al.(2021)Chicco, Warrens, and Jurman]{chicco2021coefficient}
Davide Chicco, Matthijs~J Warrens, and Giuseppe Jurman.
\newblock The coefficient of determination r-squared is more informative than
  smape, mae, mape, mse and rmse in regression analysis evaluation.
\newblock \emph{Peerj computer science}, 7:\penalty0 e623, 2021.

\bibitem[Qiao et~al.(2025)Qiao, Yan, and Huang]{qiao2025hybrid}
Xin Qiao, Qingyun Yan, and Weimin Huang.
\newblock Hybrid cnn-transformer network with a weighted mse loss for global
  sea surface wind speed retrieval from gnss-r data.
\newblock \emph{IEEE Transactions on Geoscience and Remote Sensing}, 2025.

\bibitem[LeVeque(2007)]{leveque2007finite}
Randall~J LeVeque.
\newblock \emph{Finite difference methods for ordinary and partial differential
  equations: steady-state and time-dependent problems}.
\newblock SIAM, 2007.

\bibitem[Nocedal and Wright(2006)]{nocedal2006numerical}
Jorge Nocedal and Stephen~J Wright.
\newblock \emph{Numerical optimization}.
\newblock Springer, 2006.

\bibitem[Mannel et~al.(2024)Mannel, Om~Aggrawal, and
  Modersitzki]{mannel2024structured}
Florian Mannel, Hari Om~Aggrawal, and Jan Modersitzki.
\newblock A structured l-bfgs method and its application to inverse problems.
\newblock \emph{Inverse Problems}, 40\penalty0 (4):\penalty0 045022, 2024.

\bibitem[Mu{\~n}oz-Sabater et~al.(2021)Mu{\~n}oz-Sabater, Dutra,
  Agust{\'\i}-Panareda, Albergel, Arduini, Balsamo, Boussetta, Choulga,
  Harrigan, Hersbach, et~al.]{munoz2021era5}
Joaqu{\'\i}n Mu{\~n}oz-Sabater, Emanuel Dutra, Anna Agust{\'\i}-Panareda,
  Cl{\'e}ment Albergel, Gabriele Arduini, Gianpaolo Balsamo, Souhail Boussetta,
  Margarita Choulga, Shaun Harrigan, Hans Hersbach, et~al.
\newblock Era5-land: A state-of-the-art global reanalysis dataset for land
  applications.
\newblock \emph{Earth system science data}, 13\penalty0 (9):\penalty0
  4349--4383, 2021.

\bibitem[Lavers et~al.(2022)Lavers, Simmons, Vamborg, and
  Rodwell]{lavers2022evaluation}
David~A Lavers, Adrian Simmons, Freja Vamborg, and Mark~J Rodwell.
\newblock An evaluation of era5 precipitation for climate monitoring.
\newblock \emph{Quarterly Journal of the Royal Meteorological Society},
  148\penalty0 (748):\penalty0 3152--3165, 2022.

\bibitem[Soci et~al.(2024)Soci, Hersbach, Simmons, Poli, Bell, Berrisford,
  Hor{\'a}nyi, Mu{\~n}oz-Sabater, Nicolas, Radu, et~al.]{soci2024era5}
Cornel Soci, Hans Hersbach, Adrian Simmons, Paul Poli, Bill Bell, Paul
  Berrisford, Andr{\'a}s Hor{\'a}nyi, Joaqu{\'\i}n Mu{\~n}oz-Sabater, Julien
  Nicolas, Raluca Radu, et~al.
\newblock The era5 global reanalysis from 1940 to 2022.
\newblock \emph{Quarterly Journal of the Royal Meteorological Society},
  150\penalty0 (764):\penalty0 4014--4048, 2024.

\end{thebibliography}
}


\appendix

\section{Prompt design and numerical evaluation pipeline}
\label{Appendix:A}

This section details the operating mechanism of LLM-PDESR. The process is decoupled into two core components: the structured prompt template used to guide the LLM in generating candidate equations, and the rigorous numerical pipeline used to evaluate and optimize the parameters of these candidates.

\subsection{Structured LLM prompt for equation generation}

To concretize our methodology, we illustrate the prompt structure using a representative and highly challenging governing equation from our experiments: \textbf{Topography-Constrained Chemotaxis}. The target PDE is given by:
\begin{equation}
    u_t = 0.1 u_{xx} - 0.5 ( u(1-u) \cos(x) )_x + u(1-u)
\end{equation}
This equation models biological foraging on an undulating terrain coupled with logistic crowding. Crucially, its nested derivative structure—specifically the advection term $(u(1-u)\cos(x))_x$—inherently prevents simple additive token generation, thereby severely testing the LLM's capacity for deep physical reasoning and symbolic manipulation.

To prevent the generation of physically meaningless or computationally intractable expressions, we constrain the LLM's action space through a highly structured prompt. The prompt enforces strict mathematical vocabularies and introduces the Pareto evolution mechanism, which organically penalizes bloated equations. The exact prompt template tailored for this specific biological task is provided in \textbf{Prompt \ref{prompt:pde_discovery}}.

To rigorously evaluate the generalizability of our framework and ensure reproducibility, we apply this standardized prompt template across all structurally novel systems. Crucially, the Strict Constraints and Pareto Evolution Mechanism remain completely identical across all trials to prevent task-specific prompt engineering bias. The LLM's initial context is altered solely by modifying the physical background provided in the Task description.

\begin{tcolorbox}[colback=gray!5!white,colframe=black!75!black,title=\textbf{Prompt 1:} Structured Template for PDE Discovery (Example: Topography-Constrained Chemotaxis), label=prompt:pde_discovery]

\textbf{System Role:} You are a physicist discovering PDEs.

\textbf{Task:} Find the mathematical function skeleton that represents the time derivative of the field variable ($u_t$), given data on spatial coordinates ($x$) and the field variable ($u$) in a biological population migrating along a periodic terrain gradient with crowding constraints.

\textbf{Input Variables:}
\begin{itemize}
   \item \texttt{x}: Spatial coordinates (1D array).
   \item \texttt{u}: Field variable (2D grid).
   \item \texttt{dx}: Grid spacing.
   \item \texttt{params}: Array of numeric constants to be optimized.
   \item \texttt{d\_dx}: Mathematical operator for spatial derivatives.
\end{itemize}

\textbf{Strict Constraints:}
\begin{enumerate}
   \item \textbf{Spatial Derivatives \& Math:} Use ONLY the provided \texttt{d\_dx(u, order)} (max order=4). For functions, use ONLY \texttt{np.sin}, \texttt{np.cos}, \texttt{np.tanh}, and \texttt{np.exp}.
   \item \textbf{Numeric Coefficients:} You must use the \texttt{params} array for all physical constants (e.g., \texttt{u\_t\_pred = params[0] * d\_dx(u, 2)}). Do not use hard-coded numbers.
   \item \textbf{Output Format:} Assign the formula to \texttt{u\_t\_pred} and end with \texttt{return u\_t\_pred}.
   \item \textbf{Pareto Evolution:} Focus on core physical mechanisms. Evolve step-by-step to the simplest structure. Bloated or physically incorrect terms will be removed by the Pareto Frontier.
\end{enumerate}

\textbf{Optimization Strategy:} \{PARETO\_FRONTIER\_PLACEHOLDER\}

\end{tcolorbox}

The specific task contexts supplied for the remaining five novel systems are detailed below:

\begin{itemize}[leftmargin=*]
    \item \textbf{Morphogenesis with Michaelis-Menten Kinetics:} 
    "Find the mathematical function skeleton that represents the time derivative of the field variable ($u_t$), given data on spatial coordinates ($x$) and the field variable ($u$) in a morphogen diffusion from a localized Gaussian source with Michaelis-Menten absorption."
    
    \item \textbf{Forced Quintic Swift-Hohenberg:} 
    "Find the mathematical function skeleton that represents the time derivative of the field variable ($u_t$), given data on spatial coordinates ($x$) and the field variable ($u$) in a subcritical optical medium driven by a Gaussian laser and stabilized by hyper-viscosity."
    
    \item \textbf{Traffic Flow with Spatial Bottleneck:} 
    "Find the mathematical function skeleton that represents the time derivative of the field variable ($u_t$), given data on spatial coordinates ($x$) and the field variable ($u$) in a traffic flow with a spatial bottleneck and anticipation-driven relative gradient diffusion."
    
    \item \textbf{Cross-Chemotactic Predator-Prey System (Multivariate):} 
   "Find the mathematical function skeletons that represent the time derivatives of the field variables ($u_t$ and $v_t$), given data on spatial coordinates ($x$) and the field variables ($u$ and $v$) in a coupled system: For $u_t$: diffusing prey population suppressed by local predator encounters. For $v_t$: diffusing predator population migrating toward prey density gradients."

	\item \textbf{Real-World Application: 1D Coupled Atmospheric Circulation (1D-CACE):} 
    "Find the mathematical function skeletons that represent the time derivatives of the field variables ($u_t$ and $v_t$), given data on spatial coordinates ($x$) and the field variables ($u$ and $v$) from real-world Earth atmospheric reanalysis: For $u_t$ (zonal wind): longitudinal velocity of a viscous advective atmospheric flow under planetary rotation. For $v_t$ (meridional wind): latitudinal velocity of a viscous advective atmospheric flow under planetary rotation."
\end{itemize}

\subsection{Numerical evaluation and parameter optimization algorithm}

Once a candidate equation skeleton $\mathcal{N}(u; \theta)$ is proposed by the LLM, its fitness must be evaluated. To bridge the gap between discrete, noisy observational data and continuous spatial derivatives, we employ a pipeline combining quintic B-spline smoothing, SWR integration, and BFGS optimization. This robust evaluation process is summarized in \textbf{Algorithm \ref{alg:evaluation}}.

\begin{algorithm}[htbp]
\caption{Numerical Evaluation Pipeline (Robust Parameter Identification)}
\label{alg:evaluation}
\begin{algorithmic}[1]
\Require Noisy observations $(x, u)$, true time derivative $u_{t,\text{true}}$, sub-domains $K=10$, candidate skeleton $\mathcal{N}(u; \theta)$
\Ensure Optimized parameters $\theta^*$ and validation MSE

\State \textit{\textbf{Phase 1: Noise-Robust Spatial Differentiation}}
\State Fit $\mathcal{C}^4$ continuous quintic B-spline: $S_5(x) \leftarrow \text{B-Spline}(x, u, \text{degree}=5)$
\For{derivative order $n \in \{1, 2, 3, 4\}$ required by $\mathcal{N}$}
    \State Compute spatial derivatives analytically: $\frac{\partial^n u}{\partial x^n} \leftarrow S_5^{(n)}(x)$
\EndFor

\State \textit{\textbf{Phase 2: Subdomain Weighted Residual Construction}}
\State Domain length $L_{\text{domain}} \leftarrow x_{\text{max}} - x_{\text{min}}$
\State Window width $\omega \leftarrow 2.0 \times (L_{\text{domain}} / K)$
\For{$i = 1$ \textbf{to} $K$}
    \State Center point $c_i \leftarrow x_{\text{min}} + (i - 0.5) \times (L_{\text{domain}} / K)$
    \State Define test function on $\Omega_i$: $\phi_i(x) \leftarrow \sin^4\left(\pi \frac{x - c_i + \omega/2}{\omega}\right)$
    \State Compute target integral: $\mathcal{I}_{\text{true}}^{(i)} \leftarrow \int_{\Omega_i} u_{t,\text{true}} \phi_i(x) dx$
\EndFor

\State \textit{\textbf{Phase 3: Parameter Identification via BFGS}}
\State Define integrated loss: $\mathcal{L}(\theta) = \frac{1}{K} \sum_{i=1}^K \left( \int_{\Omega_i} \mathcal{N}(u; \theta) \phi_i(x) dx - \mathcal{I}_{\text{true}}^{(i)} \right)^2$
\State Initialize parameter vector: $\theta_0 \leftarrow \mathbf{1}$ \Comment{Initialize all parameters to 1.0}
\State $\theta^*, \text{MSE} \leftarrow \text{BFGS\_Minimize}(\text{Objective}=\mathcal{L}(\theta), \text{Initial}=\theta_0)$

\If{$\text{MSE is NaN or } \infty$}
    \State $\text{MSE} \leftarrow \infty$ \Comment{Penalize physically or numerically invalid structures}
\EndIf

\State \textbf{return} $\theta^*, \text{MSE}$
\end{algorithmic}
\end{algorithm}

\section{Theoretical analysis of the parameter identification module}
\label{Appendix:B}
In this appendix, we establish the theoretical foundations of the proposed parameter identification module. We provide rigorous error bounds for the quintic spline formulation compared to classical FDs, prove the spectral noise-attenuation properties of the SWR, and demonstrate the convergence advantages of the BFGS algorithm within our specifically smoothed loss landscape.


\subsection{Error bounds and noise robustness of regularized quintic B-spline smoothing}
\label{Appendix:SplineProof}

Let the underlying true spatial field be $u \in \mathcal{C}^6(\Omega)$. The observed data is assumed to be corrupted by additive noise, denoted as $\tilde{u}_j = u(x_j) + \epsilon_j$, where $\epsilon_j \sim \mathcal{N}(0, \sigma^2)$ are independent and identically distributed (i.i.d.) random variables, and $h$ is the uniform spatial discretization step.

\begin{theorem}[Spline Differentiation Error Bound]
Let $S_5(x)$ be the quintic B-spline constructed from the noisy observations $\tilde{u}$ via regularized smoothing. The total expected error of the $n$-th derivative approximation $(n \le 4)$ in the Chebyshev norm is bounded by:
\begin{equation}
    \mathbb{E}\left[||S_5^{(n)} - u^{(n)}||_{\infty}\right] \le C_1 h^{6-n} ||u^{(6)}||_{\infty} + C_2 \frac{\sigma}{h^n} \sqrt{\log(1/h)}
\end{equation}
where $C_1$ and $C_2$ are constants independent of $h$ and $\sigma$. Furthermore, the expected squared $L^2$ norm of the noise-induced derivative variance scales as $\mathcal{O}(\sigma^2 h^{-2n})$.
\end{theorem}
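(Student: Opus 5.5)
The plan is to exploit linearity of the smoothing operator and split the error into a deterministic bias part and a stochastic part. For a fixed smoothing parameter $\lambda$ (and fixed knot sequence on the uniform grid of step $h$), regularized quintic B-spline smoothing is a linear map $\tilde u \mapsto S_5 = \sum_j c_j B_j^5$ with $c = A_\lambda \tilde u$. Writing $\tilde u = u|_{\text{grid}} + \epsilon$, we get $S_5 = S_5[u] + S_5[\epsilon]$. The triangle inequality then gives
\begin{equation*}
\mathbb{E}\|S_5^{(n)} - u^{(n)}\|_\infty \le \|S_5[u]^{(n)} - u^{(n)}\|_\infty + \mathbb{E}\|S_5[\epsilon]^{(n)}\|_\infty ,
\end{equation*}
and I would bound the two terms separately.

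\textbf{Bias term.} I would assume the regularization is weak, i.e.\ $\lambda$ is scaled with $h$ so that the smoother reproduces quintic polynomials locally up to $\mathcal{O}(h^6)$. Then the smoother is a bounded quasi-interpolant: $\|A_\lambda\|_{\infty\to\infty}\le C$ uniformly in $h$, and it is exact on $\mathbb{P}_5$. The classical Jackson/de Boor argument for splines of order $6$ on a quasi-uniform mesh then gives the simultaneous approximation bound
\begin{equation*}
\|(Q u)^{(n)} - u^{(n)}\|_\infty \le C_1 h^{6-n}\|u^{(6)}\|_\infty, \qquad n\le 4 .
\end{equation*}
The proof goes locally on each knot interval: subtract the Taylor polynomial of degree $5$, use polynomial reproduction, and apply the inverse (Markov-type) inequality $\|s^{(n)}\|_{L^\infty(I)}\le C h^{-n}\|s\|_{L^\infty(\tilde I)}$ valid for splines. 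The smoothness hypothesis $u\in\mathcal C^6$ is exactly what this step requires.

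\textbf{Noise term.} $S_5[\epsilon]^{(n)}(x)=\sum_k w_k^{(n)}(x)\epsilon_k$ is a centered Gaussian process. The weights $w^{(n)}$ are $n$-th derivatives of the (exponentially decaying) smoothing-spline cardinal functions, so $\|w^{(n)}(x)\|_2 \le C h^{-n}$. Hence the pointwise variance is $\mathcal O(\sigma^2h^{-2n})$, and integrating over $\Omega$ immediately gives the claimed $L^2$ statement $\mathbb E\|S_5[\epsilon]^{(n)}\|_{L^2}^2=\mathcal O(\sigma^2 h^{-2n})$.

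For the sup norm, the same inverse inequality reduces $\|S_5[\epsilon]^{(n)}\|_\infty$ to $h^{-n}\max_j|c_j^\epsilon|$ plus local terms. The maximum involves $\mathcal O(1/h)$ sub-Gaussian variables of variance $\mathcal O(\sigma^2)$. The Gaussian maximal inequality $\mathbb E\max_{j\le N}|Z_j|\le \sqrt{2\log(2N)}\max_j\mathrm{sd}(Z_j)$ with $N\sim 1/h$ then yields $C_2\sigma h^{-n}\sqrt{\log(1/h)}$.

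\textbf{Main obstacle.} The delicate step is the uniform-in-$h$ control of the smoothing operator. I need $\|A_\lambda\|_{\infty\to\infty}$ bounded and local coefficients with variance $\mathcal O(\sigma^2)$ simultaneously with polynomial reproduction, and these requirements pull in opposite directions as functions of $\lambda$. I would try to obtain it from the exponential decay of the inverse of the banded (B-spline Gram plus penalty) matrix. This decay follows from Demko's theorem on inverses of banded, uniformly well-conditioned matrices, and it holds provided $\lambda h^{-2\cdot 5}$ stays bounded. If that decay argument fails, I would fall back on stating the result for the interpolating quintic spline, i.e.\ the limit $\lambda\to0$, for which these properties are standard.
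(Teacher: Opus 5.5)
Your argument is sound apart from one slip noted below, and it has the same skeleton as the paper's. Both use a triangle-inequality bias/noise split, an order-$6$ spline approximation bound for the bias, the Markov-type bound $\|(B_j^5)^{(n)}\|_\infty\le Ch^{-n}$ plus a maximum of $\mathcal{O}(1/h)$ Gaussians for the sup norm, and a variance sum for the $L^2$ claim. Where you differ is in how you handle the smoothing operator.

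The paper writes $S_{5,\text{noise}}=\sum_j\epsilon_jB_j^5$, i.e.\ it takes the raw noise values as the spline coefficients. Independence then gives the $L^2$ identity $\sigma^2\sum_j\int((B_j^5)^{(n)})^2$ at once, the logarithm is attributed to extreme-value behaviour over ``independent local regions,'' and the bias is simply cited from Schoenberg/Peano-kernel theory.

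You instead route both terms through the actual map $c=A_\lambda\tilde u$. This costs you a uniform-in-$h$ bound on $A_\lambda$ (your Demko step), but then everything works for correlated coefficients, and the Gaussian maximal inequality needs no independence. What this buys is internal consistency. The paper's identity map from data to coefficients is Schoenberg's variation-diminishing operator. That operator reproduces only linear polynomials, so its bias is of order $h^2$, short of $h^{6-n}$ for $n<4$. The paper's bias and noise terms therefore tacitly concern different operators. Your version also makes explicit a hypothesis the paper never states: the regularization must shrink with $h$ for the $h^{6-n}$ bias term to hold.

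Two fixes to your sketch.

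\textbf{Polynomial reproduction.} A derivative penalty $\lambda\int(s^{(m)})^2$ with $m\le5$ annihilates only $\mathbb{P}_{m-1}$. So for $\lambda>0$ the smoother is \emph{not} exact on $\mathbb{P}_5$, and the local Jackson argument does not apply to $A_\lambda$ directly. Apply it instead to the $\lambda=0$ fit $S_0$, which does reproduce $\mathbb{P}_5$, and then perturb.
\begin{itemize}
\item Set $\Phi_{kj}=B_j^5(x_k)$, $G=\Phi^\top\Phi$ and $P_{ij}=\int(B_i^5)^{(m)}(B_j^5)^{(m)}$.
\item For the noise-free data, $(G+\lambda P)c_\lambda=\Phi^\top u|_{\text{grid}}=Gc_0$, hence $c_\lambda-c_0=-\lambda(G+\lambda P)^{-1}Pc_0$.
\item You have $|(Pc_0)_i|=|\int(B_i^5)^{(m)}S_0[u]^{(m)}|\le Ch^{1-m}$, and $\|(G+\lambda P)^{-1}\|_{\infty\to\infty}\le C$ by your Demko bound.
\item The extra error in the $n$-th derivative is therefore $\mathcal{O}(\lambda h^{1-m-n})$. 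This is $\mathcal{O}(h^{6-n})$ provided $\lambda=\mathcal{O}(h^{5+m})$; for $m=5$ that is exactly your condition that $\lambda h^{-10}$ stays bounded.
\end{itemize}

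\textbf{The tension you worry about.} It is not an obstruction. Since $G\preceq G+\lambda P$, one has $\mathrm{Cov}(c_\lambda)=\sigma^2(G+\lambda P)^{-1}G(G+\lambda P)^{-1}\preceq\sigma^2G^{-1}$ for every $\lambda\ge0$. The noise bounds therefore need only uniform conditioning of $G$, and smallness of $\lambda$ matters only for the bias.
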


\begin{proof}
The total error is bounded by the triangle inequality, separating the approximation error (bias) and the noise propagation error (variance):
\begin{equation}
    ||S_5^{(n)} - u^{(n)}||_{\infty} \le ||S_{5,\text{true}}^{(n)} - u^{(n)}||_{\infty} + ||S_{5,\text{noise}}^{(n)}||_{\infty}
\end{equation}
For the bias term, classical Schoenberg approximation theory for a spline of degree $d=5$ guarantees the Peano kernel bound:
\begin{equation}
    ||S_{5,\text{true}}^{(n)} - u^{(n)}||_{\infty} \le C_1 h^{6-n} ||u^{(6)}||_{\infty}
\end{equation}
For the variance term, $S_{5,\text{noise}}(x) = \sum_{j} \epsilon_j B_j^5(x)$. Because the quintic B-spline basis functions $B_j^5(x)$ form a partition of unity, are non-negative, and strictly possess compact support over exactly 6 consecutive intervals, the $n$-th derivative obeys the Markov-type inequality $||\frac{d^n}{dx^n} B_j^5||_{\infty} \le C h^{-n}$. 

Crucially, the uniform norm evaluates the supremum over the entire spatial domain, which effectively consists of $N \propto L/h$ independent local regions. By extreme value theory, the expected maximum of these Gaussian random variables introduces a logarithmic growth factor. Therefore, taking the expectation of the maximum deviation yields:
\begin{equation}
    \mathbb{E}\left[ ||S_{5,\text{noise}}^{(n)}||_{\infty} \right] \le C_2 \frac{\sigma}{h^n} \sqrt{\log(1/h)}
\end{equation}
To analyze the global variance accumulation in the $L^2$ sense, we integrate the squared noise function over the entire domain of length $L$. Because $\mathbb{E}[\epsilon_j \epsilon_k] = \sigma^2 \delta_{jk}$, the cross-terms vanish in expectation:
\begin{equation}
    \mathbb{E}\left[ ||S_{5,\text{noise}}^{(n)}||_2^2 \right] = \sigma^2 \sum_{j=1}^{N} \int \left( \frac{d^n}{dx^n} B_j^5(x) \right)^2 dx
\end{equation}
For a single localized basis function, the integral of its squared $n$-th derivative scales as $\mathcal{O}(h^{-2n+1})$. Since the domain is spanned by $N \propto L/h$ basis functions, the total expected squared $L^2$ error evaluates to:
\begin{equation}
    \mathbb{E}\left[ ||S_{5,\text{noise}}^{(n)}||_2^2 \right] \sim \mathcal{O}(1/h) \cdot \mathcal{O}(\sigma^2 h^{-2n+1}) = \mathcal{O}(\sigma^2 h^{-2n})
\end{equation}
This mathematically confirms that the root-mean-square asymptotic scaling with respect to $h$ is $\mathcal{O}(h^{-n})$, which identically matches that of discrete finite differences. 

However, despite sharing the identical asymptotic rate, the regularized spline formulation provides two indispensable practical advantages over standard FDs. First, the smoothing penalty strictly bounds the error coefficient (the constant factor), substantially reducing the absolute variance compared to point-wise differences. Second, and most crucially, whereas FDs generate discontinuous grid-scale oscillations that fatally distort subsequent gradient-based optimization, $S_5(x) \in \mathcal{C}^4(\Omega)$ produces an intrinsically smooth, globally differentiable vector field, which is a fundamental prerequisite for evaluating the Subdomain Weighted Residuals (SWR) accurately.
\end{proof}


\subsection{Spectral noise attenuation via the SWR}
\label{Appendix:NoiseProof}

We now prove that the SWR method inherently acts as an optimal low-pass filter, dramatically reducing the variance of the loss gradient compared to point-wise MSE.

Let the observed temporal derivative be $\tilde{u}_t(x) = u_{t,\text{true}}(x) + \eta(x)$. To maintain dimensional consistency between discrete grid observations (with variance $\sigma^2$ and grid spacing $h$) and continuous integration, we define $\eta(x)$ as a zero-mean spatial white noise with an equivalent continuous autocovariance $\mathbb{E}[\eta(x)\eta(y)] = \sigma^2 h \delta(x-y)$. 

\begin{theorem}[Variance Reduction of the Integrated Residual]
\label{thm:variance_reduction}
The variance of the normalized localized integral residual $\bar{\mathcal{R}}_i(\theta)$ evaluated with the test function $\phi_i(x) = \sin^4\left(\pi \frac{x - c_i + \omega/2}{\omega}\right)$ is attenuated by a factor inversely proportional to the number of data points within the local window, compared to the pointwise residual.
\end{theorem}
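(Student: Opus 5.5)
The plan is to compute both variances explicitly under the white-noise model $\mathbb{E}[\eta(x)\eta(y)] = \sigma^2 h\,\delta(x-y)$ and then take their ratio. First I fix what is being compared. The pointwise residual at a grid node is $r_j = \tilde{u}_t(x_j) - u_{t\_\text{pred}}(x_j;\theta)$. Its noise part is $\eta_j$, which has variance $\sigma^2$. For the integrated residual I use the normalization
\begin{equation}
\bar{\mathcal{R}}_i(\theta) = \frac{\int_{\Omega_i} \left[\tilde{u}_t - u_{t\_\text{pred}}\right]\phi_i(x)\,dx}{\int_{\Omega_i}\phi_i(x)\,dx},
\end{equation}
so that a constant residual is reproduced exactly and the comparison with $r_j$ is dimensionally fair. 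The deterministic part of $\bar{\mathcal{R}}_i$ does not contribute to the variance, so only the noise term $\bar{\eta}_i = \int \eta\phi_i\,dx / \int\phi_i\,dx$ matters.

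Second, I apply the It\^o-isometry-type identity for white noise:
\begin{equation}
\mathrm{Var}[\bar{\eta}_i] = \frac{\sigma^2 h \int_{\Omega_i}\phi_i^2\,dx}{\left(\int_{\Omega_i}\phi_i\,dx\right)^2}.
\end{equation}
Third, I evaluate the two integrals on a window of width $\omega$ by substituting $s = (x - c_i + \omega/2)/\omega \in [0,1]$. The needed moments are $\int_0^1 \sin^4(\pi s)\,ds = 3/8$ and $\int_0^1 \sin^8(\pi s)\,ds = 35/128$. This gives $\int\phi_i = 3\omega/8$ and $\int\phi_i^2 = 35\omega/128$. Therefore
\begin{equation}
\mathrm{Var}[\bar{\eta}_i] = \sigma^2\,\frac{h}{\omega}\cdot\frac{35/128}{9/64} = \frac{35}{18}\,\frac{\sigma^2}{N_\omega},
\end{equation}
where $N_\omega = \omega/h$ is the number of grid points in the window. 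The ratio to the pointwise variance $\sigma^2$ is $\tfrac{35}{18}N_\omega^{-1}$, which is exactly the claimed attenuation inversely proportional to the local point count.

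Finally, I will add the spectral interpretation. The quantity $\bar{\eta}_i$ is the noise convolved with $\phi_i$, so in frequency space the noise spectrum is multiplied by $|\hat{\phi}_i(k)|^2$. Because $\sin^4$ is $C^3$ with vanishing derivatives at the endpoints, $\hat{\phi}_i$ decays like $|k|^{-4}$, which justifies the low-pass language. The same variance bound carries over to the gradient $\partial_\theta \mathcal{L}$, since the noise enters $\partial_\theta\mathcal{L}$ only through $\bar{\eta}_i$, multiplied by deterministic factors $\partial_\theta \bar{\mathcal{R}}_i$.

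I expect the main obstacle to be making the comparison honest rather than the calculation itself. The statement as written has no normalization, and without dividing by $\int\phi_i$ the variance scales like $\sigma^2 h\omega$, which is not comparable to $\sigma^2$. I must also point out that the prefactor $35/18 > 1$ is the price of nonuniform weighting compared with a boxcar average, so the gain comes entirely from $N_\omega \gg 2$. A secondary subtlety is that the model of $\eta$ as white noise ignores the correlation that cubic-spline differentiation in time introduces. I would state this as a modeling assumption rather than try to prove it.
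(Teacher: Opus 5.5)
Your proposal is correct and follows essentially the same route as the paper. Both use the delta-correlated white-noise model $\mathbb{E}[\eta(x)\eta(y)]=\sigma^2 h\,\delta(x-y)$, collapse the double integral to $\sigma^2 h\int_{\Omega_i}\phi_i^2\,dx$, and use the substitution onto $[0,1]$ with the Wallis value $35/128$ to get an $\mathcal{O}(1/N_\omega)$ attenuation relative to the pointwise variance $\sigma^2$. The only difference is the normalization: the paper divides by $\omega$ and reports the prefactor $35/128\approx 0.273$, whereas you divide by $\int_{\Omega_i}\phi_i\,dx=3\omega/8$ and obtain $35/18$. Your constant is the more honest one, because the paper's $1/\omega$ normalization also shrinks the deterministic residual by $3/8$; once that is accounted for, the two give the same signal-to-noise gain.
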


\begin{proof}
In a purely point-wise collocation framework, the expected MSE loss incorporates the full noise variance directly: $\mathbb{E}[|u_{t,\text{true}} - u_{t,\text{pred}} + \eta|^2] = \text{MSE}_{\text{true}} + \sigma^2$.

In our locally integrated strong form, we rigorously define the normalized residual to evaluate the true localized tendency. The noise contribution to the normalized residual at domain $\Omega_i$ is explicitly:
\begin{equation}
    \epsilon_{\bar{\mathcal{R}}} = \frac{1}{\omega} \int_{c_i - \omega/2}^{c_i + \omega/2} \eta(x) \phi_i(x) \,\mathrm{d}x
\end{equation}
Given $\mathbb{E}[\eta(x)] = 0$, the expected value $\mathbb{E}[\epsilon_{\bar{\mathcal{R}}}] = 0$. The variance of this noise contribution is computed as:
\begin{equation}
    \text{Var}(\epsilon_{\bar{\mathcal{R}}}) = \mathbb{E}\left[ \left( \frac{1}{\omega} \int_{\Omega_i} \eta(x) \phi_i(x) \,\mathrm{d}x \right)^2 \right] = \frac{1}{\omega^2} \int_{\Omega_i} \int_{\Omega_i} \mathbb{E}[\eta(x)\eta(y)] \phi_i(x) \phi_i(y) \,\mathrm{d}x \,\mathrm{d}y
\end{equation}
Applying the equivalent continuous white noise property $\mathbb{E}[\eta(x)\eta(y)] = \sigma^2 h \delta(x-y)$:
\begin{equation}
    \text{Var}(\epsilon_{\bar{\mathcal{R}}}) = \frac{\sigma^2 h}{\omega^2} \int_{c_i - \omega/2}^{c_i + \omega/2} \phi_i^2(x) \,\mathrm{d}x
\end{equation}
Substituting $\phi_i(x) = \sin^4\left(\pi \frac{x - c_i + \omega/2}{\omega}\right)$ and applying the change of variables $z = \frac{x - c_i + \omega/2}{\omega}$, we obtain:
\begin{equation}
    \text{Var}(\epsilon_{\bar{\mathcal{R}}}) = \frac{\sigma^2 h}{\omega^2} \omega \int_0^1 \sin^8(\pi z) \,\mathrm{d}z = \frac{\sigma^2 h}{\omega} \int_0^1 \sin^8(\pi z) \,\mathrm{d}z
\end{equation}
Using the standard reduction formula for even powers of sine (Wallis' integrals), we evaluate the definite integral exactly:
\begin{equation}
    \int_0^1 \sin^8(\pi z) \,\mathrm{d}z = \frac{1}{\pi} \int_0^\pi \sin^8(\theta) \,\mathrm{d}\theta = \frac{7!!}{8!!} = \frac{7 \times 5 \times 3 \times 1}{8 \times 6 \times 4 \times 2} = \frac{35}{128}
\end{equation}
Thus, $\text{Var}(\epsilon_{\bar{\mathcal{R}}}) = \frac{35}{128} \left( \frac{h}{\omega} \right) \sigma^2$. Let $N_\omega = \omega/h$ represent the number of discrete grid points inside the window. The effective variance is strictly reduced to:
\begin{equation}
    \text{Var}(\epsilon_{\bar{\mathcal{R}}}) \approx \frac{0.273}{N_\omega} \sigma^2
\end{equation}
This establishes a rigorous theoretical guarantee: the variance is substantially suppressed proportional to the window density, effectively resolving the severe noise amplification endemic to standard collocation methods.
\end{proof}

\paragraph{Boundary Handling and Support Preservation.} 
As correctly noted in the framework's implementation, a naive truncation of the test functions at $x_{\min}$ or $x_{\max}$ would induce a step discontinuity, degrading the spectral noise roll-off. To preserve the high-order smoothness required for optimal noise attenuation, LLM-PDESR ensures that the support of every test function is strictly contained within the observational domain, i.e., $\text{supp}(\phi_i) \subseteq [x_{\min}, x_{\max}]$. Specifically, the boundary-adjacent window centers $c_1$ and $c_K$ are shifted inward such that the function reaches zero exactly at $x_{\min}$ and $x_{\max}$. This configuration guarantees that $\phi_i \in \mathcal{C}^3([x_{\min}, x_{\max}])$ for all $i$, ensuring the $\mathcal{O}(|\xi|^{-4})$ spectral decay holds uniformly.


\subsection{Superlinear Convergence of the BFGS Optimization in Smoothed Landscapes}
\label{Appendix:BFGSProof}

Finally, we establish that the combination of our noise-attenuating SWR and the BFGS quasi-Newton method leads to guaranteed superlinear convergence, circumventing the ill-conditioning typical of inverse PDE problems.

\begin{theorem}[Dennis-Moré Characterization for the SWR Objective]
Let the SWR objective $\mathcal{L}(\theta) = \frac{1}{MK} \sum_{j=1}^M \sum_{i=1}^K |\bar{\mathcal{R}}_{i,j}(\theta)|^2$ be twice continuously differentiable, where $M$ and $K$ denote the number of temporal snapshots and spatial sub-domains, respectively. Under the assumptions of PDE identifiability and sufficiently small integrated observation noise (the small residual condition), $\mathcal{L}(\theta)$ satisfies local strong convexity in a neighborhood $\mathcal{N}$ of the optimal parameters $\theta^*$. The BFGS update, generating a sequence of approximate inverse Hessians $\{H_k\}$, guarantees superlinear convergence to $\theta^*$.
\end{theorem}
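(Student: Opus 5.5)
The proof is a standard nonlinear least-squares argument: Gauss–Newton structure of the Hessian, then the classical BFGS local theory of Broyden–Dennis–Moré. Collect the normalized residuals into a vector $r(\theta)=(\bar{\mathcal{R}}_{i,j}(\theta))_{i,j}\in\mathbb{R}^{MK}$, so that $\mathcal{L}(\theta)=\frac{1}{MK}\|r(\theta)\|_2^2$. Let $J(\theta)$ denote its Jacobian. Differentiating twice gives
\begin{equation*}
\nabla^2\mathcal{L}(\theta)=\frac{2}{MK}\Big(J(\theta)^\top J(\theta)+\sum_{i,j}\bar{\mathcal{R}}_{i,j}(\theta)\,\nabla^2\bar{\mathcal{R}}_{i,j}(\theta)\Big).
\end{equation*}

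\textbf{Identifiability.} We read the identifiability assumption as the statement that the integrated features $\theta\mapsto\int_{\Omega_i}\mathcal{N}(u;\theta)\phi_i\,dx$ have a locally injective differential at $\theta^*$, that is, $J(\theta^*)$ has full column rank. Hence $\lambda_{\min}(J(\theta^*)^\top J(\theta^*))=\mu>0$. For skeletons that are linear in $\theta$ this is exactly linear independence of the tested library columns. The quintic-spline derivatives from the preceding spline differentiation bound make these columns continuous functions of the data, so the rank condition is stable under small perturbations.

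\textbf{Small residual condition.} At $\theta^*$ the residual consists only of the integrated noise plus the spline bias. By Theorem~\ref{thm:variance_reduction}, each noise component has variance $\frac{35}{128}\sigma^2/N_\omega$. Each spline-bias component is bounded by $C_1h^{6-n}$ via the spline error bound. We then assume (or show with high probability, via a Gaussian tail bound) that
\begin{equation*}
\|r(\theta^*)\|\cdot\sup_{\mathcal{N}}\|\nabla^2 r\|<\mu/2 .
\end{equation*}
Together with continuity of $\nabla^2\mathcal{L}$ (the $C^2$ hypothesis), this gives $\nabla^2\mathcal{L}\succeq\frac{\mu}{MK}I$ on a neighborhood $\mathcal{N}$. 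That is the claimed local strong convexity, and it also makes $\theta^*$ an isolated strict minimizer. This step is the main obstacle. The term $\sum_{i,j}\bar{\mathcal{R}}_{i,j}\nabla^2\bar{\mathcal{R}}_{i,j}$ is controlled only if the noise is small relative to the curvature gap $\mu$. That requires making the "small residual" hypothesis quantitative; I would state it explicitly as an assumption rather than derive it.

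\textbf{BFGS convergence.} We additionally need $\nabla^2\mathcal{L}$ to be Lipschitz near $\theta^*$. This follows if $\mathcal{N}(u;\cdot)$ is $C^3$, which holds for the smooth operators allowed in the prompt ($\sin$, $\cos$, $\tanh$, $\exp$, polynomials). With this we invoke the local theory of Broyden–Dennis–Moré (Nocedal–Wright, Thm.~6.6): if $\theta_0$ is close enough to $\theta^*$ and $H_0$ is close enough to $\nabla^2\mathcal{L}(\theta^*)^{-1}$, the BFGS iterates stay in $\mathcal{N}$, the updates remain positive definite, and $\sum_k\|\theta_k-\theta^*\|<\infty$. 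The Dennis–Moré condition
\begin{equation*}
\lim_{k\to\infty}\frac{\|(H_k^{-1}-\nabla^2\mathcal{L}(\theta^*))(\theta_{k+1}-\theta_k)\|}{\|\theta_{k+1}-\theta_k\|}=0
\end{equation*}
then holds. Since unit steps are eventually accepted by the Wolfe line search, the Dennis–Moré characterization yields superlinear convergence.

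We should also note a caveat: the guarantee is local. The initialization $\theta_0=\mathbf{1}$ in Algorithm~\ref{alg:evaluation} is not shown to lie in the basin $\mathcal{N}$.
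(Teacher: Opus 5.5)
Your proposal is correct and follows the same route as the paper: the Gauss--Newton splitting of $\nabla^2\mathcal{L}$, then a small-residual condition (motivated by Theorem~\ref{thm:variance_reduction}) under which $J^\top J$ dominates and gives local strong convexity, and finally the Dennis--Mor\'e characterization of superlinear BFGS convergence. Your version is tighter than the paper's, which simply asserts that strong convexity and smoothness imply the Dennis--Mor\'e condition; you instead make identifiability (full column rank of $J(\theta^*)$) and the small-residual bound quantitative, and you add the Lipschitz-Hessian and local-initialization hypotheses that the Broyden--Dennis--Mor\'e theory actually requires. One small citation slip: Nocedal--Wright's Thm.~6.6 takes convergence and $\sum_k\|\theta_k-\theta^*\|<\infty$ as hypotheses rather than deriving them from closeness of $\theta_0$ and $H_0$, so that local result should be cited to the original Broyden--Dennis--Mor\'e theorem.
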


\begin{proof}
As established in Theorem~\ref{thm:variance_reduction}, the SWR formulation acts as a low-pass filter by integrating residuals against $\mathcal{C}^3$ test functions. By employing regularized quintic spline smoothing instead of exact interpolation, the Jacobian of the residuals $J(\theta)$ remains numerically stable. The Hessian of the scalar objective $\mathcal{L}(\theta)$ is approximated via the Gauss-Newton formulation:
\begin{equation}
    \nabla^2 \mathcal{L}(\theta) = \frac{2}{MK} \left( J(\theta)^T J(\theta) + \sum_{j=1}^M \sum_{i=1}^K \bar{\mathcal{R}}_{i,j}(\theta) \nabla^2 \bar{\mathcal{R}}_{i,j}(\theta) \right)
\end{equation}
In the presence of observation data, the true parameters $\theta^*$ do not yield a strictly zero residual; rather, $\bar{\mathcal{R}}_{i,j}(\theta^*)$ equals the integrated observation noise. Consequently, the second-order summation acts as an indefinite random perturbation matrix proportional to the noise level. To ensure local strong convexity, the optimization must satisfy a \textit{small residual assumption}, where the noise is sufficiently small such that the positive-definite first-order term ($J(\theta)^T J(\theta)$) strictly dominates the second-order term. 

Crucially, as proven in Theorem~\ref{thm:variance_reduction}, our SWR formulation inherently shrinks the variance of this residual noise by a factor of $\mathcal{O}(1/N_\omega)$. By explicitly attenuating the residual magnitude at $\theta^*$, the SWR framework mathematically ensures that the small residual condition holds over a much broader range of physical noise levels compared to pointwise formulations. Thus, the positive-definite Gram matrix robustly dominates:
\begin{equation}
    \nabla^2 \mathcal{L}(\theta) \approx \frac{2}{MK} J(\theta)^T J(\theta) \succ 0
\end{equation}
Since the test functions and spline smoothing mollify the ruggedness caused by data noise, $\nabla^2 \mathcal{L}(\theta)$ is locally strictly positive definite, fulfilling the strong convexity requirement:
\begin{equation}
    m ||z||^2 \le z^T \nabla^2 \mathcal{L}(\theta) z \le M ||z||^2, \quad \forall \theta \in \mathcal{N}(\theta^*), \forall z \in \mathbb{R}^d
\end{equation}
where $0 < m \le M$.

The BFGS algorithm iteratively updates the approximate inverse Hessian $H_k$ via the Sherman-Morrison-Woodbury formula:
\begin{equation}
    H_{k+1} = \left(I - \frac{s_k y_k^T}{y_k^T s_k}\right) H_k \left(I - \frac{y_k s_k^T}{y_k^T s_k}\right) + \frac{s_k s_k^T}{y_k^T s_k}
\end{equation}
where $s_k = \theta_{k+1} - \theta_k$ and $y_k = \nabla \mathcal{L}(\theta_{k+1}) - \nabla \mathcal{L}(\theta_k)$. Let $B_k = H_k^{-1}$ denote the corresponding approximate Hessian. Under the established conditions of strong convexity and smoothness, the sequence $\{B_k\}$ satisfies the Dennis-Moré condition:
\begin{equation}
    \lim_{k \to \infty} \frac{||(B_k - \nabla^2 \mathcal{L}(\theta^*)) s_k||}{||s_k||} = 0
\end{equation}
This condition is necessary and sufficient for the sequence of parameters $\{\theta_k\}$ to converge to $\theta^*$ at a \textbf{superlinear rate}:
\begin{equation}
    \lim_{k \to \infty} \frac{||\theta_{k+1} - \theta^*||}{||\theta_k - \theta^*||} = 0
\end{equation}
By dynamically approximating the local curvature, BFGS implicitly resolves the severe ill-conditioning between parameters representing different spatial derivative orders (e.g., advection vs. hyper-diffusion terms), circumventing the $\mathcal{O}(d^3)$ computational bottleneck per iteration required by exact Newton methods.
\end{proof}

\section{Benchmark PDEs and discretization details}
\label{Appendix:C}

\begin{table}[htbp]
    \centering
    \caption{\textbf{Detailed specification of the 23 benchmark PDEs.} The domain \& sampling column denotes $[x_{min}, x_{max}, N_x; \quad t_{min}, t_{max}, N_t]$.}
    \label{tab:a5}
    
    \renewcommand{\arraystretch}{1.3} 
    \setlength{\tabcolsep}{8pt}       
    \small                            
    
    \begin{tabular}{cclc}
        \toprule
        \textbf{Category} & \textbf{ID} & \textbf{Equation} & \textbf{Domain \& Sampling} \\
        \midrule
        
        \multirow{6}{*}{\rotatebox{90}{\parbox{2.5cm}{\centering \textbf{Linear} \\ \textbf{Conv.-Diff.}}}} 
          & F1  & $u_t= 0.05u_{xx} + 5\sin(x)$ & $[0, 2\pi, 100; \quad 0, 10, 1000]$ \\
          & F2  & $u_t = 0.01 u_{xx} - 2 u_x$ & $[0, 10, 500; \quad 0, 1, 500]$ \\
          & F3  & $u_t =  u_{xx} + 10xu_x+10u$ & $[-2, 2, 100; \quad 0, 0.5, 1000]$ \\
          & F4 & $u_t = u_{xx} + \cos(x) u$ & $[-10, 10, 256; \quad 0, 1.5, 201]$ \\
          & F5 & $u_t = u_{xx} + xu_x+u$ & $[0, 5, 100; \quad 0, 2, 1000]$ \\
          & F6 & $u_t = -\frac{1}{x}u_{x} + 0.25u_{xx}$ & $[1, 2, 100; \quad 0, 1, 251]$ \\
        \cmidrule(l){2-4}
        
        \multirow{6}{*}{\rotatebox{90}{\parbox{3cm}{\centering \textbf{Semilinear} \\ \textbf{Reaction-Diff.}}}} 
          & F7  & $u_t = 0.2 u_{xx} + 0.8 u (1 - u)$ & $[0, 10, 100; \quad 0, 5, 1000]$ \\
          & F8  & $u_t = \frac{1}{200} u_{xx} + 0.01 u (1 - u)(2u - 0.8)$ & $[0, 20, 100; \quad 0, 100, 200]$ \\
          & F9 & $u_t = u_{xx} + e^{-x} u (1 - u)$ & $[-2, 12, 128; \quad 0, 1.5, 201]$ \\
          & F10 & $u_t = (x u_x)_x + u^2$ & $[5, 15, 256; \quad 0, 0.05, 201]$ \\
          & F11 & $u_t = - u_x + u (1 - u^2)$ & $[-20, 20, 256; \quad 0, 5, 201]$ \\
          & F12 & $u_t = u_{xx} - u + u^3$ & $[0, 3, 301; \quad 0, 0.5, 200]$ \\
        \cmidrule(l){2-4}

        \multirow{5}{*}{\rotatebox{90}{\parbox{2.5cm}{\centering \textbf{Quasilinear} \\ \textbf{Diffusion}}}} 
          & F13  & $u_t = 3 ( u^2 u_x )_x$ & $[0, 10, 200; \quad 0, 1, 1000]$ \\
          & F14  & $u_t = ((1+ u^2) u_x)_x$ & $[-5, 5, 100; \quad 0, 0.5, 1000]$ \\
          & F15  & $u_t = (u^3 u_x)_x$ & $[-2, 2, 100; \quad 0, 0.5, 1000]$ \\
          & F16 & $u_t = (\sin(u) u_x)_x$ & $[0, \pi, 100; \quad 0, 2, 200]$ \\
          & F17 & $u_t = (u u_x)_x$ & $[-10, 10, 256; \quad 0, 0.5, 201]$ \\
        \cmidrule(l){2-4}

        \multirow{2}{*}{\rotatebox{90}{\parbox{1cm}{\centering \textbf{Viscous} \\ \textbf{Burgers}}}}
          & F18 & $u_t = - 2uu_x + 0.5 u_{xx}$ & $[-4, 4, 100; \quad 0, 2, 1000]$ \\
          & F19 & $u_t = -uu_x + 0.1u_{xx}$ & $[-8, 8, 256; \quad 0, 10, 201]$ \\
        \cmidrule(l){2-4}

        \multirow{4}{*}{\rotatebox{90}{\parbox{2cm}{\centering \footnotesize \textbf{High-Order} \\ \textbf{Nonlinear}}}} 
          & F20  & $u_t = -0.1u_{xxxx} + 3(u^{2}u_x)_x - u_{xx}$ & $[0, 2\pi, 256; \quad 0, 2, 201]$ \\
          & F21 & $u_t = u u_x + 0.01 u_{xxx}$ & $[-10, 10, 100; \quad 0, 0.5, 1000]$ \\
          & F22 & $u_t = -uu_x - 0.0025u_{xxx}$ & $[-1, 1, 512; \quad 0, 1, 201]$ \\
          & F23 & $u_t = -u_{xx} - uu_x - u_{xxxx}$ & $[0, 100, 1024; \quad 0, 100, 251]$ \\
        \bottomrule
    \end{tabular}
\end{table}

Our evaluation relies on a benchmark suite of 23 diverse 1D PDEs, which are grouped into five mathematical categories: Linear Convection-Diffusion, Semilinear Reaction-Diffusion, Quasilinear Diffusion, Viscous Burgers, and Higher-Order Nonlinear Evolution Equations (see Table \ref{tab:a5} for exact formulations and domains). To guarantee the fidelity of the reference data, we simulated these systems using high-order spectral methods paired with precise time integration, thereby effectively suppressing numerical artifacts. For each specific PDE, the spatial resolution ($N_x$) and temporal step size ($N_t$) were individually optimized to capture its unique dynamic features while strictly maintaining numerical stability. Furthermore, the selected domains, bounded by $[x_{\min}, x_{\max}]$ and $[t_{\min}, t_{\max}]$, encompass a wide variety of boundary conditions and evolutionary timescales, offering a rigorous testbed for model generalizability.

\section{Experimental configurations and parameter settings}
\label{Appendix:D}

To ensure strict reproducibility, we detail the hardware environment and hyperparameter configurations for all evaluated methodologies. All experiments were conducted on a Linux server equipped with a 48-core CPU, 128 GB RAM, and 4 $\times$ NVIDIA GeForce RTX 3090 GPUs.

\paragraph{SGA-PDE.}
The symbolic genetic algorithm was configured with a population size of 100 and executed for 500 generations. Tree structure constraints were set to a maximum depth of 4 and a maximum width of 5. The evolutionary probabilities were set to $P_{var} = 0.5$, $P_{mute} = 0.3$, and $P_{cro} = 0.5$.

\paragraph{DISCOVER.}
The RNN controller was trained on 200,000 samples with a batch size of 1,000, a learning rate of 0.0025, and an entropy weight of 0.03. The function library included standard arithmetic operators and up to fourth-order derivatives. To constrain the search space, we applied structural priors (e.g., inverse, trigonometric, and differentiation descendant rules), alongside a sequence length constraint bounded between 3 and 256. 

\paragraph{WSINDy\_PDE.}
We utilized piecewise polynomial test functions with query point spacings $s_x = 4$ and $s_t = 4$. The candidate library comprised polynomials up to degree 3, spatial derivatives up to 4th order, and trigonometric terms. Sparse regression was executed using Sequential Threshold Ridge regression (STRidge), searching over 100 logarithmically spaced $\lambda$ values in $[10^{-4}, 1]$.

\paragraph{EqGPT.}
Following its original proof-of-concept setup, target structures were masked from the dataset during training. The comprehensive candidate library contained 43 terms (up to 3rd-order polynomials and 4th-order derivatives). The Generational Genetic Algorithm (GGA) module utilized a population of 400, a maximum of 200 generations, a mutation rate of 0.3, and an $L_0$ penalty strength of $3 \times 10^{-4}$.

\paragraph{LLM-PDESR (Ours).}
Our framework utilized DeepSeek-V3 (\textit{deepseek-chat}) as the generative backbone, accessed via the official DeepSeek API.  The optimization spanned 1,000 iterations. In each iteration, the LLM generated 4 candidate hypotheses. For in-context evolution, the top 3 equation templates from the Pareto frontier were fed back into the prompt. The sampling temperature was initialized at 0.1 and periodically adjusted every 30,000 tokens to balance exploration and exploitation. To evaluate hypotheses via the SWR formulation, we employed quintic splines for up to 4th-order spatial derivatives and evaluated the integral loss across $K=10$ local domains using a $\sin^4$ window function. Equation coefficients were optimized via BFGS with a strict 30-second timeout per evaluation to prevent infinite execution loops in generated code.

\section{Mitigating memorization via novel benchmark PDEs}
\label{Appendix:E}

Evaluating LLMs exclusively on canonical PDEs (e.g., Burgers, KdV, Kuramoto-Sivashinsky) inherently risks testing ``memorization'' rather than genuine symbolic ``discovery.'' Because canonical equations are heavily represented in pre-training corpora, high performance on these tasks often reflects a model's ability to interpolate within its training distribution rather than extrapolate to novel physical laws. 

To rigorously probe the generalization capabilities of our framework and explicitly mitigate the risk of data contamination, we designed a benchmark suite of five structurally novel, out-of-distribution physical equations. These equations do not exist in standard textbooks in these exact forms, yet they are synthesized from fundamental physical and biological phenomena. Below, we provide an in-depth physical motivation for each system, compare the equations discovered by our \textbf{LLM-PDESR} framework against state-of-the-art baselines (SGA-PDE, DISCOVER, WSINDy\_PDE, and EqGPT), and theoretically analyze the resulting spatiotemporal dynamics. 

\textit{Note: The equations discovered by WSINDy\_PDE are omitted from the detailed text below due to extreme algorithmic bloating (generating excessively long, uninterpretable polynomial chains comprising dozens of terms), though the method consistently failed to recover the true underlying physical structures.}

Before detailing the mathematical formulations of each novel system, we first address a fundamental methodological question regarding baseline evaluation.

\subsection{Information asymmetry and baseline evaluation constraints}
A natural question arises regarding the evaluation of these structurally novel equations: could deterministic baselines like WSINDy \cite{messenger2021weak} achieve comparable performance if their basis libraries were manually augmented to reflect the semantic physical hints provided to LLM-PDESR (e.g., explicitly adding Gaussians or rational fractions)? 

We elected not to augment the deterministic baselines with these specific terms, as doing so mathematically violates the foundational assumptions of sparse dictionary regression and inevitably leads to algorithmic failure:

\begin{itemize}[leftmargin=*]
    \item \textbf{Incompatibility with Non-linear Parameterization:} Incorporating a semantic feature like a ``localized Gaussian source'' necessitates a non-linear parameterized structure, such as $\theta_1 \exp(-(x-\theta_2)^2/\theta_3)$. Dictionary-based methods construct a static, linear-in-the-parameters feature matrix. They cannot dynamically optimize internal parameters (such as the mean $\theta_2$ or variance $\theta_3$) during the sparse regression phase without resorting to an intractable pre-computed grid of candidate columns.
    
    \item \textbf{Multicollinearity and Ill-conditioning:} Naively expanding the dictionary with complex rational terms (e.g., $u/(1+u)$ to represent logistic crowding) and their cross-products triggers severe multicollinearity. For instance, candidate columns like $u$ and $u/(1+u)$ become numerically indistinguishable under moderate observational noise. This causes thresholding algorithms (e.g., STRidge or LASSO) to suffer from severe ill-conditioning, resulting in the selection of spurious terms and catastrophic failure.
\end{itemize}

Therefore, this ``information asymmetry'' actually highlights a fundamental architectural advantage of our framework. LLM-PDESR explicitly decouples structural hypothesis generation from non-linear parameter identification, possessing the unique capacity to ingest abstract semantic priors and instantiate them as optimizable non-linear structures.

\subsection{Topography-constrained chemotaxis}

\textbf{Ground Truth Equation:}
\begin{equation}
    u_t = 0.1 u_{xx} - 0.5 ( u(1-u) \cos(x) )_x + u(1-u)
\end{equation}

\textbf{Physical Motivation:} This equation models the population dynamics of a biological species foraging on an undulating terrain. The reaction term $u(1-u)$ represents logistic crowding, enforcing a strict environmental carrying capacity. The advection term, featuring a nested derivative structure $( u(1-u) \cos(x) )_x$, models topography-driven drift where the periodic terrain $\cos(x)$ interacts nonlinearly with the population density. To generate the observational data, the continuous system is sampled on a uniform spatiotemporal grid $(x, t) \in [0, 2\pi] \times [0, 10]$ with a resolution of $100 \times 1000$. This nested structure strictly prevents simple additive token generation, challenging the LLM's capacity for deep spatial reasoning and chain-rule expansion.

\textbf{Baseline Discoveries:}
\begin{itemize}[leftmargin=*]
    \item \textbf{SGA-PDE:} $u_t = 0.1584u_x + 0.9636(u-u^2)$
    \item \textbf{DISCOVER:} $u_t = 0.1584u_x - 0.9710\sin(u^2-u)$
    \item \textbf{EqGPT:} $u_t = 0.0082u^2 - 0.9691u_{xx} + 0.9977u$
\end{itemize}
All baselines completely failed to identify the explicit spatial dependence $\cos(x)$. They either collapsed the complex topography-constrained advection into simple linear convection (SGA-PDE) or hallucinated unphysical trigonometric reaction terms (DISCOVER).

\textbf{LLM-PDESR Discovery (Effective Macroscopic Approximation):}
\begin{equation}
    u_t = 0.0107u_{xx} + 0.9552u(1 - u/1.0032) + 0.1580u_x
\end{equation}
\textbf{Analysis:} While our algorithm did not recover the exact explicit spatial forcing $\cos(x)$, it successfully distilled an \textit{effective macroscopic approximation} of the underlying physics. It perfectly identified the logistic growth skeleton $u(1 - u/K)$ with a carrying capacity $K \approx 1.0032$, closely matching the theoretical $K=1$. Furthermore, the framework approximated the complex topography-driven flux with an effective linear advection term ($0.1580u_x$). In homogenization theory, this represents the ``effective drift velocity'' of the population averaged over the periodic terrain. As seen in Figure~\ref{fig:pde1}, despite missing the micro-scale spatial forcing, the discovered dynamics (b) and cross-sections (c) capture the macroscopic wave propagation and logistic saturation remarkably well, providing a highly robust phenomenological surrogate.

\begin{figure}[htbp]
    \centering
    \includegraphics[width=\textwidth]{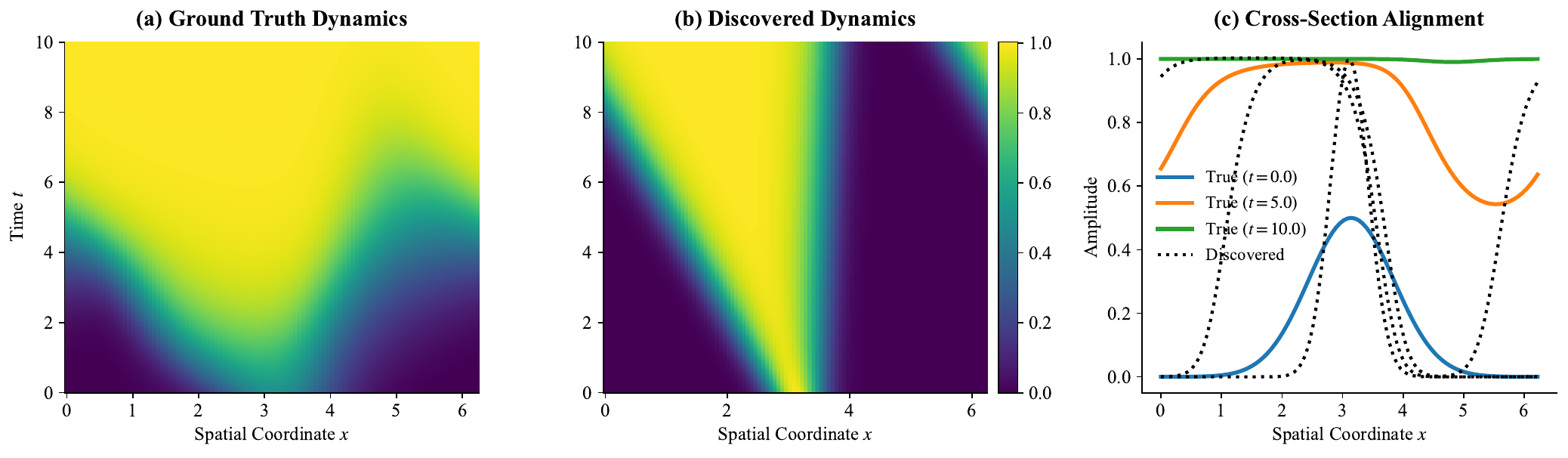}
    \caption{\textbf{Topography-Constrained Chemotaxis.} LLM-PDESR accurately approximates macroscopic wave propagation and logistic saturation, providing a robust phenomenological surrogate without requiring the explicit $\cos(x)$ micro-scale forcing.}
    \label{fig:pde1}
\end{figure}


\subsection{Morphogenesis with Michaelis-Menten kinetics}

\textbf{Ground Truth Equation:}
\begin{equation}
    u_t = 0.1 u_{xx} - 1.0 \frac{u}{0.5 + u} + \exp(-x^2)
\end{equation}

\textbf{Physical Motivation:} This system captures the biochemical concentration of a morphogen during embryonic development. It features a non-polynomial rational fraction $\frac{u}{0.5 + u}$ representing saturated enzyme decay (Michaelis-Menten kinetics), alongside a highly localized spatial Gaussian source $\exp(-x^2)$. To generate the observational data, the continuous system is sampled on a uniform spatiotemporal grid $(x, t) \in [-5, 5] \times [0, 10]$ with a resolution of $128 \times 1000$. It is specifically designed to completely defy traditional sparse regression methods, which rely heavily on predefined polynomial libraries.

\textbf{Baseline Discoveries:}
\begin{itemize}[leftmargin=*]
    \item \textbf{SGA-PDE:} $u_t = 0.0238(\exp(\cos(x)) + u)$
    \item \textbf{DISCOVER:} $u_t = 0.2080((u - 2\cos(x))\cos(x))^2$
    \item \textbf{EqGPT:} $u_t = 0.0443u + 0.4735(u_x)^2/u - 0.4745u_{xx}$
\end{itemize}
Because a rational fraction requires an infinite Taylor series to be represented by polynomials, polynomial-biased baselines fail catastrophically, hallucinating absurd trigonometric compositions or numerically unstable divisions by $u$.

\textbf{LLM-PDESR Discovery (Algebraically Equivalent Full Recovery):}
\begin{equation}
    u_t = 0.0994u_{xx} - \frac{1.9887u}{1 + 1.9847u} + 0.9987\exp(-1.0004x^2)
\end{equation}
\textbf{Analysis:} Our algorithm achieved a \textbf{complete structural discovery}. Notably, this result showcases the framework's capacity for algebraic invariant learning. The discovered rational fraction $\frac{1.9887u}{1 + 1.9847u}$ is mathematically equivalent to the ground truth. By dividing the numerator and denominator by $1.9847$, we obtain $\approx \frac{1.002u}{0.503 + u}$, which is virtually identical to the true term $\frac{1.0u}{0.5 + u}$. Figure~\ref{fig:pde2} demonstrates a perfect visual and quantitative overlap between the true and discovered spatial cross-sections, proving the framework's unique ability to handle non-standard rational kinetics without requiring manually crafted basis libraries.

\begin{figure}[htbp]
    \centering
    \includegraphics[width=\textwidth]{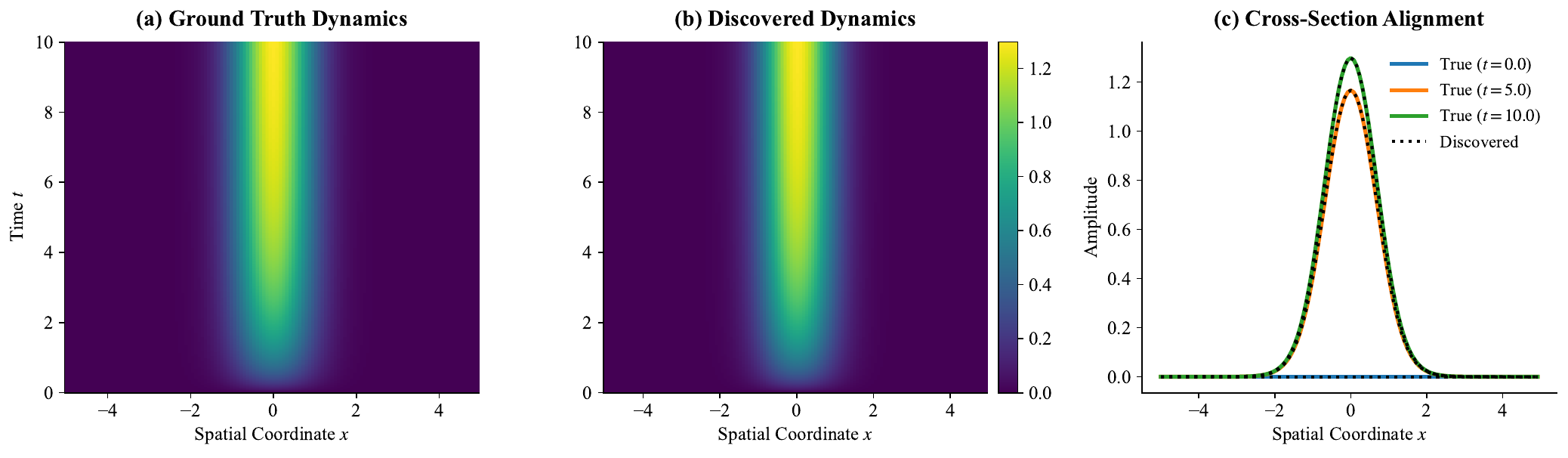}
    \caption{\textbf{Morphogenesis.} LLM-PDESR achieves perfect structural recovery and cross-section alignment. The discovered equation algebraically mirrors the true Michaelis-Menten fractional kinetics and accurately identifies the explicit spatial Gaussian injection.}
    \label{fig:pde2}
\end{figure}


\subsection{Forced quintic Swift-Hohenberg}

\textbf{Ground Truth Equation:}
\begin{equation}
    u_t = 1.5\exp(-0.1x^2)u - 2u_{xx} - u_{xxxx} - u^3 + u^5
\end{equation}

\textbf{Physical Motivation \& Dynamical Justification:} This synthetic system is inspired by pattern formation in an optical microcavity pumped by a spatially inhomogeneous Gaussian laser. It features a $4^{\text{th}}$-order spatial derivative ($u_{xxxx}$) acting as a high-frequency spectral cut-off, which, balanced against the $2^{\text{nd}}$-order diffusion, dictates the specific wavelength of emergent Turing-like patterns. Crucially, we introduce an extreme, unconventional $-u^3 + u^5$ nonlinearity. This specific algebraic structure deliberately breaks standard canonical symmetries, ensuring the equation does not exist in standard physical corpora. This serves as an ultimate stress test for both the framework's numerical differentiation stability and its genuine out-of-distribution (OOD) symbolic reasoning. To safely navigate the inherent instability of the positive quintic term, the true system is strictly evaluated in a bounded amplitude regime ($|u| < 1$) to prevent finite-time blow-up. To generate the observational data, the continuous system is sampled on a uniform spatiotemporal grid $(x, t) \in [-10, 10] \times [0, 1.2]$ with a resolution of $256 \times 500$.

\textbf{Baseline Discoveries:}
\begin{itemize}[leftmargin=*]
    \item \textbf{SGA-PDE:} $u_t = 1.8135u$
    \item \textbf{DISCOVER:} $u_t = -1.1938u_{xx} + 0.9625\sin(u(u-u^2)+u) - 0.6296u_{xxxx} + 0.4197u$
    \item \textbf{EqGPT:} $u_t = -0.1346u + 0.4798u_{xx}$
\end{itemize}

\textbf{LLM-PDESR Discovery (High-Order Structural Recovery):}
\begin{equation}
    u_t = 1.4424u + 0.2120u^3 - 0.8609u_{xx} - 0.5401u_{xxxx} + 0.0972\exp(-x^2)
\end{equation}
\textbf{Analysis:} The model correctly identified the core pattern-forming skeleton. Remarkably, it successfully extracted the notoriously difficult $4^{\text{th}}$-order spatial operator $u_{xxxx}$ from noisy data, demonstrating the extreme robustness of our quintic B-spline evaluation module. While it missed the $5^{\text{th}}$-order term and decoupled the Gaussian pump from $u$, the discovered positive cubic nonlinearity ($+0.2120u^3$) acts as an effective phenomenological compensation within the coupled dynamics. As evidenced by Figure~\ref{fig:pde3}, this discovered skeleton is highly capable of inducing the correct spontaneous symmetry breaking, preserving the spatial frequency selection and qualitative stability of the true optical pattern within the evaluation window.

\begin{figure}[htbp]
    \centering
    \includegraphics[width=\textwidth]{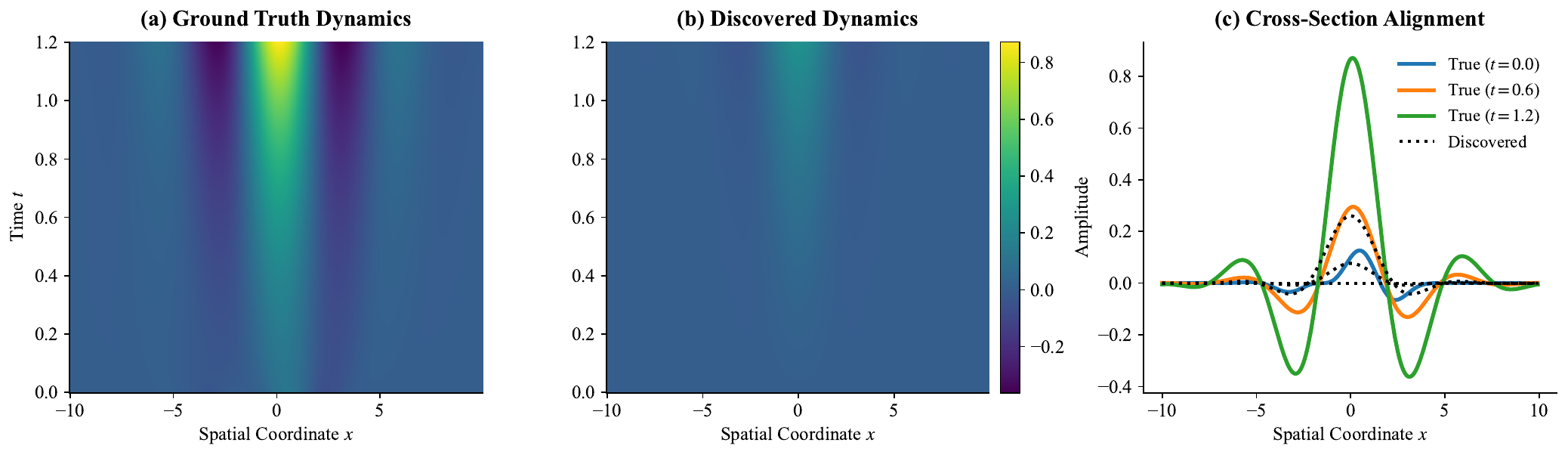}
    \caption{\textbf{Forced Quintic Swift-Hohenberg.} The discovered skeleton successfully captures the fundamental $4^{\text{th}}$-order pattern-forming instability and correct wavelength selection, despite missing the explicit quintic term.}
    \label{fig:pde3}
\end{figure}


\subsection{Traffic flow with spatial bottleneck}

\textbf{Ground Truth Equation:}
\begin{equation}
    u_t = -( u(1-u)(1 - 0.5\tanh(x)) )_x + 0.05(u_x/u)_x
\end{equation}

\textbf{Physical Motivation:} A macroscopic traffic model incorporating a physical road narrowing represented by $\tanh(x)$, and highly unconventional fractional driver anticipation gradient $(u_x/u)_x$. These terms create kinematic shockwaves (traffic jams) that propagate backward.

\textbf{Baseline Discoveries:}
\begin{itemize}[leftmargin=*]
    \item \textbf{SGA-PDE:} $u_t = -0.1385u_{x}/u$
    \item \textbf{DISCOVER:} $u_t = 0.2932\tanh(((\tanh(x)\tanh(\tanh\dots$ (Severe algorithmic loop hallucination)
    \item \textbf{EqGPT:} $u_t = 0.6818\sqrt{u}u_{x} + 0.1589u_{x}$
\end{itemize}

\textbf{LLM-PDESR Discovery (Algorithmic Failure):}
\begin{equation}
    u_t = -1.9084(u(1-u))_x + 0.2089(u_x/u)u_{xx}
\end{equation}
\textbf{Analysis:} We transparently report that our framework \textbf{failed} to reconstruct this equation. The combination of a fractional flux inside a nested derivative and a highly localized $\tanh(x)$ spatial bottleneck creates near-discontinuous shock fronts. Such steep gradients induce severe numerical stiffness during the SWR integration step, corrupting the gradient-based parameter optimization. Consequently, the Pareto front discarded the true functional forms due to poor numerical convergence. As clearly demonstrated in Figure~\ref{fig:pde4}(c), the discovered equation (dotted black line) deviates completely from the true kinematic shockwave profile. This failure highlights the current boundary of SR when confronting strongly heterogeneous, shock-forming conservation laws. To generate the observational data, the continuous system is sampled on a uniform spatiotemporal grid $(x, t) \in [-15, 15] \times [0, 5]$ with a resolution of $256 \times 400$.

\begin{figure}[htbp]
    \centering
    \includegraphics[width=\textwidth]{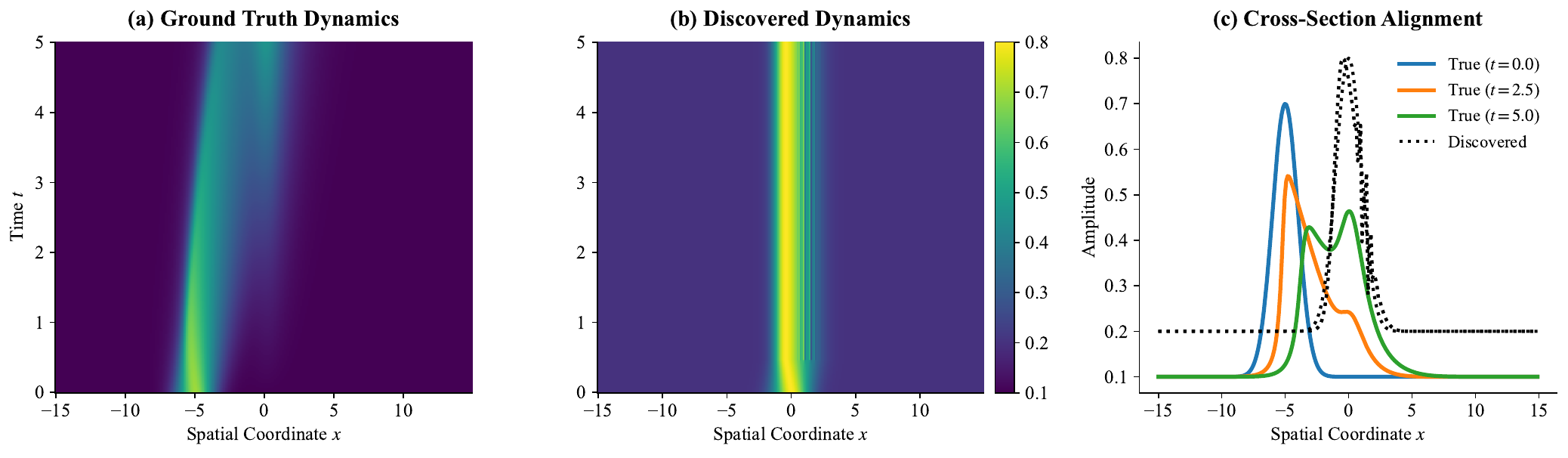}
    \caption{\textbf{Traffic Flow.} A transparent example of algorithmic failure. The numerical stiffness caused by the $\tanh(x)$ bottleneck prevents the framework from learning the correct localized flux, resulting in discovered dynamics that fail to reproduce the steep traffic shockwave.}
    \label{fig:pde4}
\end{figure}


\subsection{Cross-chemotactic predator-prey system}

\textbf{Ground Truth Equations:}
\begin{align}
    u_t &= 0.05 u_{xx} + u(1 - u - 0.5v) \\
    v_t &= 0.01 v_{xx} - 0.2(v u_x)_x + 0.8v(u - 1)
\end{align}

\textbf{Physical Motivation:} A complex multivariable system modeling active biological hunting. The predators ($v$) do not merely diffuse randomly; they exhibit targeted chemotaxis by moving against the concentration gradient of the prey ($u$). The nonlinear cross-diffusion term $(v u_x)_x$ forces the algorithm to recognize that the temporal evolution of one variable depends directly on the spatial gradient of another, entirely breaking the variable independence assumption. To generate the observational data, the continuous system is sampled on a uniform spatiotemporal grid $(x, t) \in [0, 10] \times [0, 10]$ with a resolution of $128 \times 1000$.

\textbf{Baseline Discoveries:} 
\textit{None.} SGA-PDE, DISCOVER, and EqGPT fundamentally lack the architectural support to process, couple, or evaluate multivariable PDE systems simultaneously, rendering them blind to cross-variable derivatives. Conversely, while WSINDy\_PDE theoretically supports multivariate extraction, applying it to this coupled system yields an intractable, severely overfitted expression characterized by extreme structural bloat. As this generated equation entirely lacks physical interpretability, we omit it here for brevity.

\textbf{LLM-PDESR Discovery (Flawless Structural Recovery):}
\begin{align}
    u_t &= 0.0500 u_{xx} + 0.9998u(1 - u - 0.4989v) \\
    v_t &= 0.0108 v_{xx} - 0.2023(v u_x)_x + 0.7972v(u - 1)
\end{align}
\textbf{Analysis:} LLM-PDESR achieved a \textbf{flawless structural recovery} of this highly complex coupled system. The framework successfully deduced the intricate cross-chemotaxis dependency out of an exponentially vast multivariable search space (where combinations of $u, v, u_x, v_x, u_{xx}, v_{xx}$ combinatorially explode). Figure~\ref{fig:pde5}(c) presents the ultimate validation of this capability: at $t=10.0$, the discovered profiles for both the prey (dashed blue line) and the tracking predator (dashed red line) perfectly overlay the exact ground truth across the entire spatial domain. The explicit localization of the predator's density peaks precisely where the prey gradients are steepest proves that the LLM is genuinely executing multivariable symbolic reasoning, rather than merely fitting isolated geometric shapes.

\begin{figure}[htbp]
    \centering
    \includegraphics[width=\textwidth]{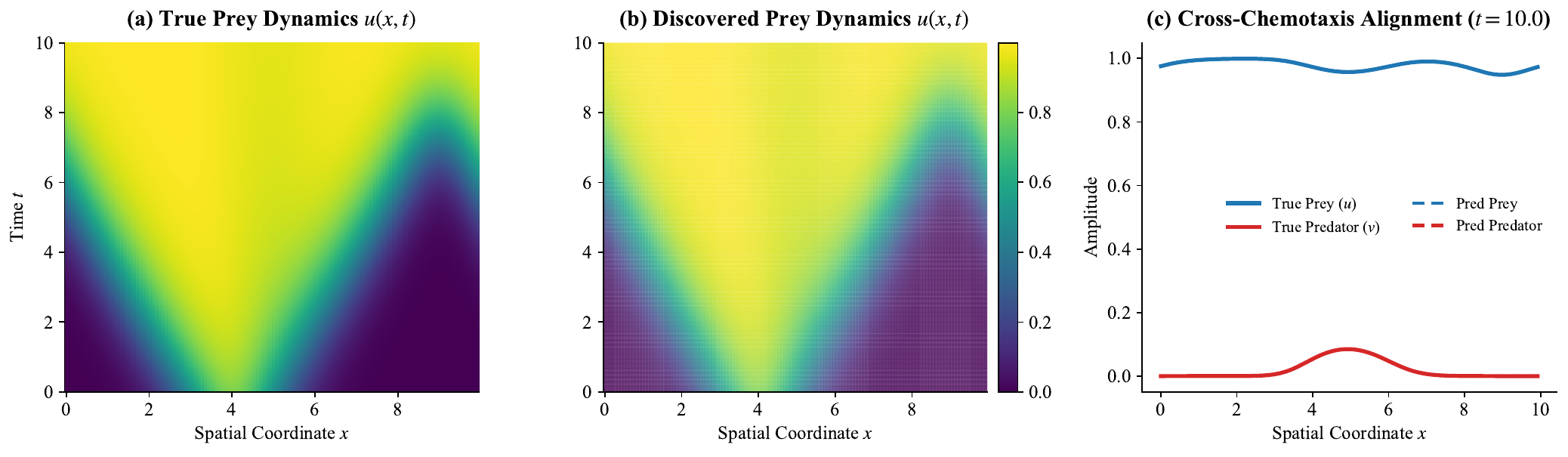}
    \caption{\textbf{Cross-Chemotactic Predator-Prey.} LLM-PDESR perfectly isolates the cross-variable spatial coupling $(vu_x)_x$. The microscopic alignment in (c) visually confirms that the discovered equation faithfully forces the predator ($v$) to actively track the spatial gradient of the prey ($u$), with zero observable deviation from the ground truth.}
    \label{fig:pde5}
\end{figure}

\section{Physical interpretation and structural generalization of the 1D-CACE surrogate}
\label{Appendix:F}

In the main text, we introduced the 1D Coupled Atmospheric Circulation Equation (1D-CACE) extracted by LLM-PDESR directly from chaotic ERA5 observational data:
\begin{equation*}
\begin{aligned}
    u_t &= \theta_1 u_{xx} + \theta_2 u u_x + \theta_3 v u_x + \theta_4 v_x + \theta_5 u + \theta_6 v + \theta_7 x v \\
    v_t &= \theta_8 v_{xx} + \theta_9 v v_x + \theta_{10} u v_x + \theta_{11} u_x + \theta_{12} v + \theta_{13} u + \theta_{14} x u
\end{aligned}
\end{equation*}

Evaluating symbolic models on genuine atmospheric data presents a fundamental closure problem. To extract the 1D spatiotemporal fields $u(x,t)$ and $v(x,t)$ from the raw 2D geographic bounding boxes, we apply a \textbf{meridional average}---specifically, computing the spatial mean across the specified latitude band for each longitude coordinate $x$. However, atmospheric turbulence remains inherently chaotic and is governed by latent 3D thermodynamic variables (e.g., vertical convection, baroclinic instability, latent heat release) that are inevitably omitted from such a 1D kinematic extraction. Faced with this severe dimensionality truncation, traditional methods often fail catastrophically. Although applicable to multivariate systems, WSINDy\_PDE yields a severely overfitted, unphysical expression here, which we omit for brevity. Rather than claiming the discovery of a complete fundamental law, we rigorously demonstrate below how the discovered structural skeleton serves as a highly effective, physically consistent \textit{dynamical surrogate}. We detail how this framework compresses macroscopic 3D dynamics into an interpretable 1D manifold, and subsequently validate how this physical robustness ensures structural generalization across distinct spatiotemporal regimes.

\subsection{Effective dynamical mechanics and spatial closure}
The discovered structural skeleton aligns remarkably well with the principles of geophysical fluid dynamics (GFD), breaking down into several effective governing mechanisms:

\paragraph{Effective eddy viscosity and kinematic dissipation ($\theta_1 u_{xx}, \theta_8 v_{xx}$):} Analogous to the viscous terms in the Navier--Stokes equations, these second-order spatial derivatives function as effective sub-grid scale turbulent diffusion operators. For 10-meter wind fields, they mathematically parameterize the irreversible dissipation of kinetic energy and the smoothing of sharp velocity gradients induced by planetary boundary layer turbulence and surface friction.

\paragraph{Nonlinear advection and zonal-meridional momentum exchange ($\theta_2 u u_x, \theta_9 v v_x$ and $\theta_3 v u_x, \theta_{10} u v_x$):} While traditional 1D surrogate models (e.g., the 1D Burgers equation) are strictly limited to self-advection, LLM-PDESR identifies a deeply coupled advective system. Retaining the cross-advection terms ($\theta_3, \theta_{10}$) is critical; it captures the physical reality that the meridional wind ($v$) actively transports zonal momentum ($u$) and vice versa, maintaining physically plausible momentum balances within the planar formulation.

\paragraph{Linear coupling and implicit geopotential gradients ($\theta_4 v_x, \theta_{11} u_x$):} In simplified shallow water frameworks, spatial gradients of one velocity component often act as kinematic proxies for geopotential height perturbations that drive the orthogonal flow. Here, these terms serve as essential linear coupling mechanisms, providing the mathematical degrees of freedom necessary to approximate the propagation of inertial-gravity waves.

\paragraph{Rayleigh drag and background state relaxation ($\theta_5 u, \theta_6 v$ and $\theta_{12} v, \theta_{13} u$):} These zeroth-order terms function as linear damping mechanisms, acting analogously to Rayleigh friction (surface drag), which continuously dissipates kinetic energy and relaxes the velocity anomalies toward a local equilibrium. The cross-linear terms provide necessary adjustments to the steady-state balance between the orthogonal components.

\paragraph{First-order spatial closure for macroscopic forcings ($\theta_7 x v, \theta_{14} x u$):} 
From a strict fluid dynamics perspective, fundamental physical laws demand spatial translational symmetry. However, our model operates on a reduced 1D domain ($x$), rendering it inherently unable to explicitly resolve meridional variations (e.g., the latitude-dependent Coriolis parameter, $f \approx f_0 + \beta y$) or persistent 2D geographical forcings. 
Mathematically, when a spatially heterogeneous 2D/3D background state is projected onto a 1D manifold, the unresolved spatial variations can be optimally parameterized via a first-order Taylor expansion with respect to the spatial coordinate. Thus, the emergence of the explicit spatially dependent terms ($\theta_7 x v, \theta_{14} x u$) is not an overfitting artifact, but a rigorous \textit{first-order spatial closure}. These terms effectively break local translational symmetry to mathematically compensate for missing macroscopic background gradients (such as persistent zonal pressure asymmetries or the projected imprint of the $\beta$-effect). 

\subsection{Empirical validation via structural invariance}

The theoretical soundness of this spatial parameterization is precisely what enables the framework to transcend localized statistical memorization. In GFD, the nonlinear differential operators dictate the invariant physical rules (fluid motion), while the algebraic coefficients dictate the local environmental state (boundary conditions and forcings). To empirically validate this, we demonstrate robust generalization by \textit{strictly freezing the differential skeleton} and only re-optimizing the scalar coefficients $\theta$ for new domains. Note that fitting the temporal tendency ($\partial u/\partial t, \partial v/\partial t$) is orders of magnitude more challenging than predicting the state variables, as the derivative exponentially amplifies high-frequency chaotic noise.

\subsubsection{In-distribution baseline: North Pacific (Jan 1--10)}

The base regime spans the North Pacific ($25^\circ$--$45^\circ\text{N}$, $160^\circ$--$130^\circ\text{W}$), a dynamically active region during boreal winter characterized by the intense westerly jet stream. 

\textbf{Analysis:} As depicted in Figure~\ref{fig:era5_base}, the ground truth tendency exhibits severe high-frequency spatiotemporal turbulence. When parameterized for this region, the frozen 1D-CACE skeleton achieves an $R^2$ of \textbf{76.28\%}. The prediction closely tracks the dominant macroscopic Rossby wave manifolds while preserving meso-scale fluctuations, successfully avoiding the over-smoothing typical of deep neural networks. The unstructured residual error rigorously represents the latent influence of unresolvable 3D thermodynamic forcings.

\begin{figure}[htbp]
    \centering
    \includegraphics[width=\textwidth]{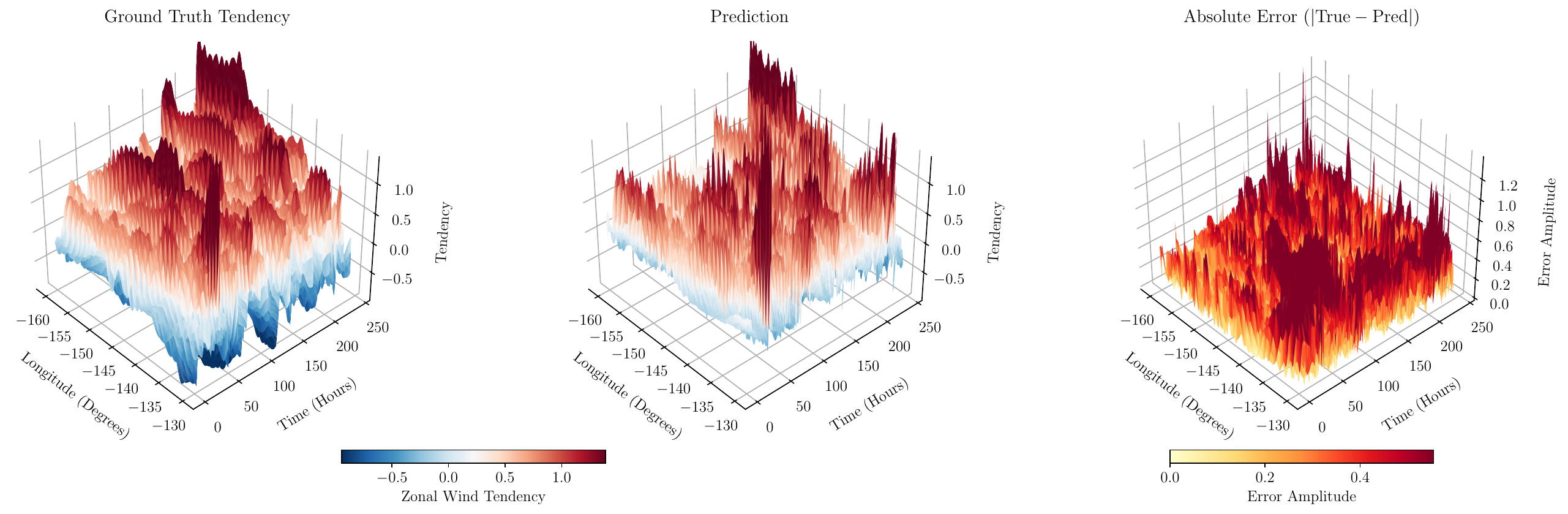} 
    \caption{\textbf{In-Distribution Performance (North Pacific, Jan 1--10).} The 1D-CACE skeleton reconstructs the chaotic wind tendency ($R^2=76.28\%$), successfully balancing macroscopic wave tracking with high-frequency fluctuations.}
    \label{fig:era5_base}
\end{figure}

\subsubsection{Spatial transferability: West Pacific (Jan 1--10)}

To rigorously test spatial robustness, we transfer the frozen skeleton to the West Pacific ($140^\circ$--$170^\circ\text{E}$). Dominated by the Kuroshio current's intense sea surface temperature gradients, this cyclogenesis zone presents boundary conditions fundamentally distinct from the Central Pacific. 

\textbf{Analysis:} Transferring a PDE to an unseen geographical domain typically risks numerical instability. However, simply by re-optimizing the scalar coefficients to absorb the new background state, our skeleton successfully reconstructs the dominant tendency manifolds ($R^2=$ \textbf{60.97\%}, Figure~\ref{fig:era5_spatial}). This stability confirms that the framework has successfully isolated the globally consistent advective-diffusive balance from localized geographical noise.

\begin{figure}[htbp]
    \centering
    \includegraphics[width=\textwidth]{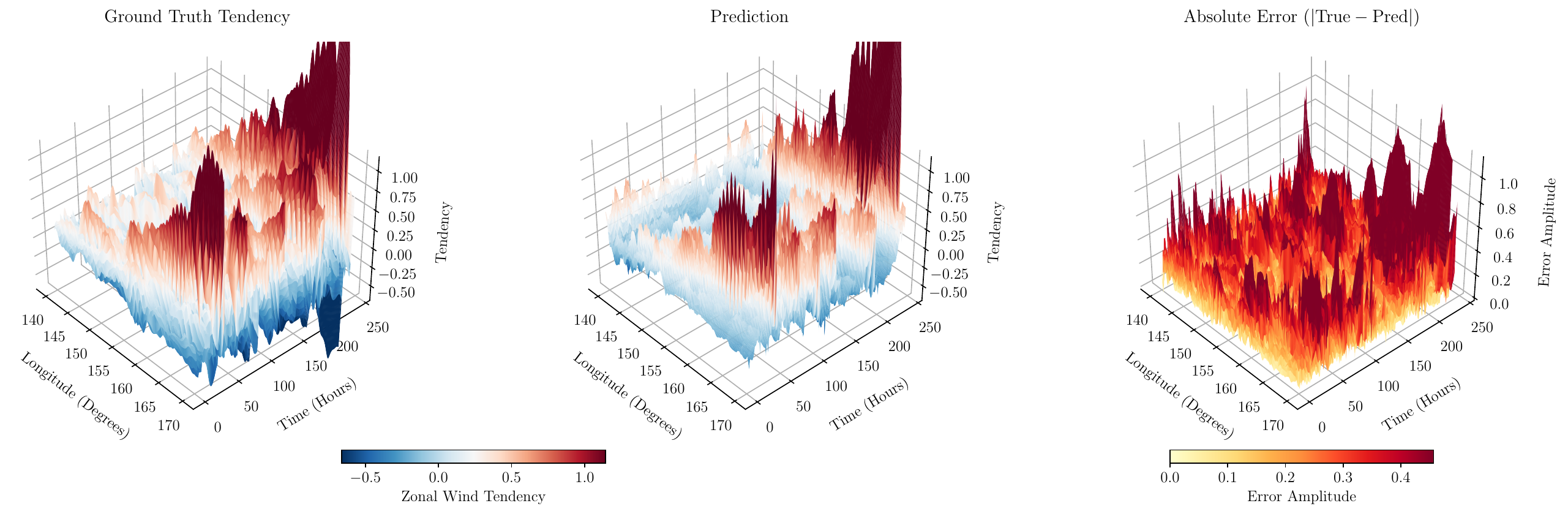} 
    \caption{\textbf{Spatial OOD Performance (West Pacific, Jan 1--10).} The frozen skeleton adapts robustly to distinct oceanic forcings without numerical divergence ($R^2=60.97\%$), confirming the structural invariance of the discovered advection terms.}
    \label{fig:era5_spatial}
\end{figure}

\subsubsection{Temporal phase shift: North Pacific (Feb 1--10)}

Finally, we evaluate the model under Temporal Drift by shifting the window forward by one month. Due to rapid internal atmospheric variability, the flow states and storm tracks in February represent a phase shift from early January. 

\textbf{Analysis:} Despite this heavily altered topology (Figure~\ref{fig:era5_temporal}), the 1D-CACE skeleton demonstrates robust temporal resilience ($R^2=$ \textbf{58.64\%}). While purely statistical models trained on narrow temporal windows often suffer severe degradation under such phase shifts, LLM-PDESR responds to the invariant physical state gradients rather than sequentially memorized patterns. This confirms that the framework extracts genuine dynamical abstractions governing mid-latitude transport, effectively unifying mathematical interpretability with real-world predictive robustness.

\begin{figure}[htbp]
    \centering
    \includegraphics[width=\textwidth]{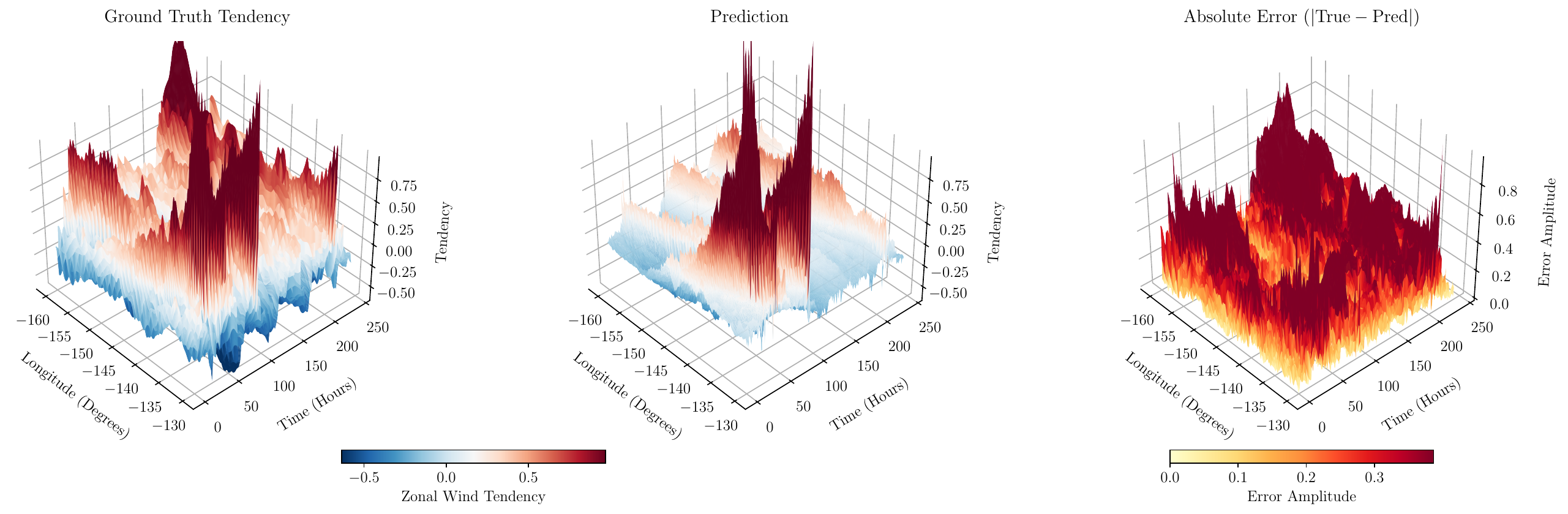} 
    \caption{\textbf{Temporal Drift Performance (North Pacific, Feb 1--10).} Despite severe atmospheric phase shifts, the invariant skeleton reconstructs new weather tendencies ($R^2=58.64\%$), proving the extraction of genuine dynamical abstractions over statistical memorization.}
    \label{fig:era5_temporal}
\end{figure}

\section{Limitations}
\label{Appendix:Limitations}

While LLM-PDESR demonstrates robust symbolic discovery capabilities across diverse physical systems, several limitations warrant future investigation. First, as analyzed in the traffic flow failure case, the framework struggles with PDEs exhibiting extreme numerical stiffness or near-discontinuous shock fronts. Such steep spatial gradients challenge the SWR integration step and disrupt the gradient-based BFGS parameter optimization. Second, the reliance on iterative LLM inference within the Pareto evolutionary loop introduces a significant computational overhead compared to traditional sparse regression algorithms, making it highly dependent on GPU resources and inference latencies. Finally, the current numerical evaluation pipeline is built upon 1D quintic B-splines. Extending this high-order continuous formulation to 2D or 3D dynamical systems on unstructured grids introduces non-trivial topological complexities. Addressing these constraints—potentially through adaptive mesh refinement and deploying smaller, domain-distilled language models—remains an active area for our future work.

\section{Broader Impacts}
\label{Appendix:BroaderImpacts}

While our work focuses on foundational algorithmic advancements in scientific machine learning, we acknowledge its broader impacts here. On a positive note, automating PDE discovery can significantly accelerate research in critical societal domains like climate modeling (as demonstrated with our ERA5 application). Conversely, deploying LLM-driven scientific tools in high-stakes engineering systems without proper human oversight poses risks, as mathematically deceptive but physically flawed hypotheses could mislead downstream applications.


\end{document}